\definecolor{todored}{HTML}{FF0000}
\definecolor{todored}{HTML}{FF0000}
\newcommand{\figpos}[1]{\textbf{\textit{#1}}}
\newcommand{\figtitle}[1]{\textbf{#1}}
\newcommand{\appendixref}[1]{Appendix~\ref{#1}}
\renewcommand{\t}{\tau}
\newcommand{\treal}{\t_{\mathrm{hl}}}
\newcommand{\tdiscovered}{\t_{\mathrm{d}}}
\newcommand{\tnet}[1]{t_{#1}}
\newcommand{\Tset}{T}
\newcommand{\Tsetnamed}[1]{T_{\mathrm{#1}}}
\newcommand{\Treal}{\Tset_{\mathrm{HL}}}
\newcommand{\AS}{\mathrm{AS}}
\newcommand{\ACC}{\mathrm{ACC}}
\newcommand{\E}{\mathbb{E}}
\newcommand{\A}{\mathcal{A}}
\newcommand{\xeloss}{l_{\mathrm{ce}}}
\newcommand{\Xt}{X_{\mathrm{tr}}}
\newcommand{\Xtadv}{X_{\mathrm{tr}}^{\mathrm{adv}}}
\newcommand{\Xte}{X_{\mathrm{te}}}
\newcommand{\Xteadv}{X_{\mathrm{te}}^{\mathrm{adv}}}
\newcommand{\Xspace}{\mathcal{X}}
\newcommand{\Yspace}{\{0, 1\}}
\newcommand{\D}{\mathcal{D}}
\newcommand{\R}{\mathbb{R}}
\newcommand{\similarity}{\mathrm{sim}}
\newcommand{\secref}[1]{Sec.~\ref{#1}}
\newcommand{\tabref}[1]{Tab.~\ref{#1}}
\DeclareMathOperator*{\argmax}{\arg\max}
\newcommand{\AStext}{AS\xspace}
\newcommand{\taskdiscovery}{task discovery\xspace}
\definecolor{blue}{HTML}{4878D0}
\definecolor{violet}{HTML}{956CB4}
\definecolor{green}{HTML}{6ACC64}
\definecolor{orange}{HTML}{EE854A}
\definecolor{red}{HTML}{D65F5F}
\newcommand{\green}[1]{\textcolor{green}{#1}}
\newcommand{\blue}[1]{\textcolor{blue}{#1}}
\newcommand{\violet}[1]{\textcolor{violet}{#1}}
\newcommand{\orange}[1]{\textcolor{orange}{#1}}
\newcommand{\red}[1]{\textcolor{red}{#1}}
\renewcommand{\eqref}[1]{Eq.~\ref{#1}}
\newcommand{\figref}[2]{Fig.~\ref{#1}#2}
\newtheorem{prop}{Proposition}
\newcommand{\myparagraph}[1]{\textbf{#1}}
\newcommand{\figcutspace}{\vspace{-0.4cm}}
\title{Task Discovery: Finding the Tasks that\\Neural Networks Generalize on}
\author{%
    \hspace{-4pt}Andrei Atanov\;\; Andrei Filatov\;\; Teresa Yeo\;\; Ajay Sohmshetty\;\; Amir Zamir\vspace{7pt}\\
	Swiss Federal Institute of Technology (EPFL)\\\\
	\url{https://taskdiscovery.epfl.ch}
}
\begin{document}

\maketitle

\begin{abstract}
When developing deep learning models, we usually decide what task we want to solve then search for a model that generalizes well on the task. An intriguing question would be: \emph{what if, instead of fixing the task and searching in the model space, we fix the model and search in the task space?} Can we find tasks that the model generalizes on? How do they look, or do they indicate anything? These are the questions we address in this paper. 

We propose a \textit{task discovery} framework that automatically finds examples of such tasks via optimizing a generalization-based quantity called \textit{agreement score}.
We demonstrate that one set of images can give rise to many tasks on which neural networks generalize well. These tasks are a reflection of the \emph{inductive biases} of the learning framework and the \emph{statistical patterns present in the data}, thus they can make a useful tool for analysing the neural networks and their biases. As an example, we show that the discovered tasks can be used to automatically create \emph{``adversarial train-test splits''} which make a model fail at test time, \emph{without changing the pixels or labels}, but by only selecting how the datapoints should be split between the train and test sets. We end with a discussion on human-interpretability of the discovered tasks.

\end{abstract}

\section{Introduction}
Deep learning models are found to generalize well, i.e., exhibit low test error when trained on human-labelled tasks.
This can be seen as a consequence of the models' inductive biases that favor solutions with low test error over those which also have low training loss but higher test error.
In this paper, we aim to find what are examples of other tasks\footnote{In the context of this paper, generally, a ``task'' is a labelling of a dataset, and any label set defines a ``task''.} that are favored by neural networks, i.e., on which they generalize well. We will also discuss some of the consequences of such findings.

We start by defining a quantity called \textit{agreement score} (\AStext) to measure how well a network generalizes on a task.
It quantifies whether two networks trained on the same task with different training stochasticity (e.g., initialization) make similar predictions on new test images.
Intuitively, a high \AStext can be seen as a necessary condition for generalization, as there cannot be generalization if the \AStext is low and networks converge to different solutions.
On the other hand, if the \AStext is high, then there is a stable solution that different networks converge to, and, therefore, generalization is \emph{possible} (see \appendixref{app:as-accuracy-connection}).
We show that the \AStext indeed makes for a useful metric and differentiates between human- and random-labelled tasks (see \figref{fig:pull}{-center}).

Given the \AStext as a prerequisite of generalization, we develop a \textit{\taskdiscovery} framework that optimizes it and finds new tasks on which neural networks generalize well.
Experimentally, we found that the same images can allow for many different tasks on which different network architectures generalize (see an example in \figref{fig:pull}{-right}).

Finally, we discuss how the proposed framework can help us understand deep learning better, for example, by demonstrating its biases and failure modes.
We use the discovered tasks to split a dataset into train test sets in an \textit{adversarial} way instead of random splitting.
After training on the train set, the network fails to make correct predictions on the test set.
The adversarial splits can be seen as using the ``spurious'' correlations that exist in the datasets via discovered tasks.
We conjecture that these tasks provide strong adversarial splits since \taskdiscovery finds tasks that are ``favoured'' by the network the most.
Unlike manually curated benchmarks that reveal similar failing behaviours \cite{sagawa2019distributionally,liang2022metashift,wiles2021fine}, or pixel-level adversarial attacks \cite{szegedy2013intriguing,kurakin2016adversarial,madry2017towards}, the proposed approach finds the adversarial split automatically and \emph{does not need to change any pixels or labels} or collect new difficult images.
\begin{figure}
    {\centering
    \includegraphics[width=\textwidth]{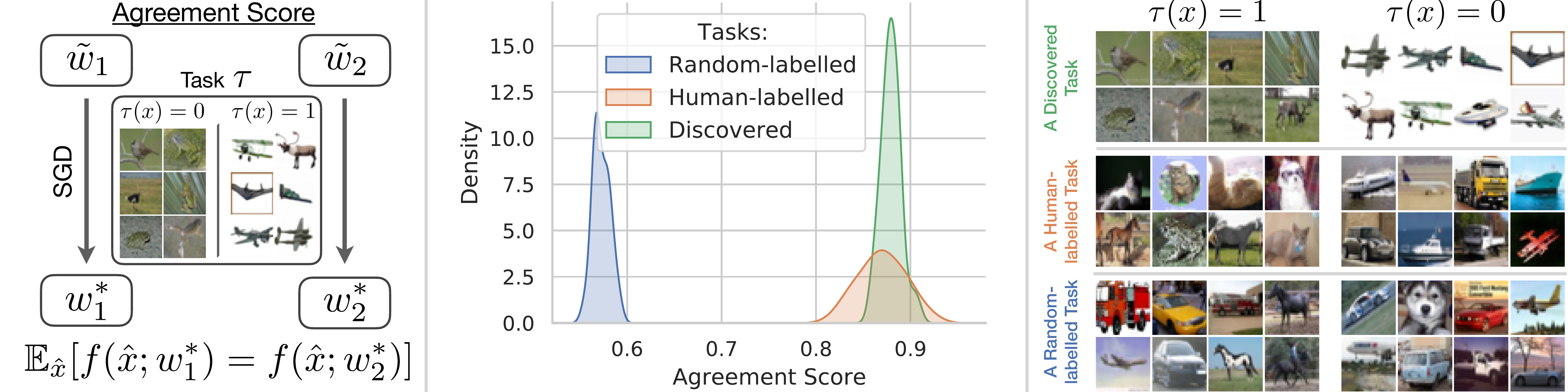}
    \caption{
    \figpos{Left:} The \emph{agreement score} (\AStext) measures whether two networks trained on the task $\tau$ with different optimization stochasticity (e.g., initialization) make the same prediction on a test image $\hat{x}$.
    $f(\hat{x}, w)$ denotes a network with weights $w$ applied to an input $\hat{x}$.
    $\Tilde{w}$ and $w^*$ denote initial and converged weights, respectively.
    \figpos{Center:}
    The agreement score successfully differentiates between \orange{human-labelled} tasks based on CIFAR-10 original labels and \blue{random-labelled} tasks.
    The \green{\emph{\taskdiscovery}} framework finds novel tasks with high agreement scores.
    \figpos{Right:} Examples of a \green{discovered}, \orange{human-labelled} (animals vs. non-animals) and \blue{random-labelled} tasks on CIFAR-10.
    This particular discovered task looks visually distinct and seems to be based on the image background.
    \figcutspace
    }
    \label{fig:pull}}
\end{figure}

\vspace{-5pt}
\section{Related Work}
\vspace{-5pt}
\myparagraph{Deep learning generalization.}
It is not well understood yet why deep neural networks generalize well in practice while being overparameterized and having large complexity 
\cite{hornik1989multilayer, hardt2016identity, nguyen2018optimization, yun2019small, vershynin2020memory}.
A line of work approaches this question by investigating the role of the different components of deep learning \cite{arora2019implicit, neyshabur2014search, arpit2017closer, arora2019implicit, morcos2018insights, zhao2018bias, toneva2018empirical, hacohen2020let} and developing novel complexity measures that could predict neural networks' generalization \cite{neyshabur2015norm,keskar2016large,liang2019fisher,nagarajan2019generalization,smith2017bayesian}.
The proposed \taskdiscovery framework can serve as an experimental tool to shed more light on when and how deep learning models generalize.
Also, compared to existing quantities for studying generalization, namely \cite{zhang2021understanding}, which is based on the \emph{training process only}, our agreement score is directly \emph{based on test data and generalization}.

\myparagraph{Similarity measures between networks.}
Measuring how similar two networks are is challenging and depends on a particular application.
Two common approaches are to measure the similarity between hidden representations \cite{li2015convergent, kornblith2019similarity, laakso2000content, raghu2017svcca, lenc2015understanding, morcos2018insights} and the predictions made on unseen data \cite{somepalli2022can}.
Similar to our work, \cite{hacohen2020let, jiang2021assessing, pliushch2021deep} also use an agreement score measure.
In contrast to these works, which mainly use the \AStext as a metric, we turn it into an optimization objective to find new tasks with the desired property of good generalization.

\myparagraph{Bias-variance decomposition.}
The test error, a standard measure of generalization, can be decomposed into bias and variance, known as the bias-variance decomposition \cite{geman1992neural, bishop2006pattern, valentini2004bias, domingos2000unified}.
The \AStext used in this work can be seen as a measure of the variance term in this decomposition.
Recent works investigate how this term behaves in modern deep learning models \cite{belkin2019reconciling, nakkiran2021deep, yang2020rethinking, neal2018modern}.
In contrast, we characterize its dependence on the task being learned instead of the model's complexity and find tasks for which the \AStext is maximized.

\myparagraph{Meta-optimization.} Meta-optimization methods seek to optimize the parameters of another optimization method.
They are commonly used for hyper-parameter tuning \cite{maclaurin2015gradient, liu2018darts,pedregosa2016hyperparameter, behl2019alpha} and gradient-based meta-learning \cite{finn2017model, weng2018metalearning, nichol2018reptile}. 
We use meta-optimization to find the task parameters that maximize the agreement score, the computation of which involves training two networks on this task.
Meta-optimization methods are memory and computationally expensive, and multiple solutions were developed \cite{lorraine2020optimizing, luketina2016scalable, nichol2018first} to amortize these costs, which can also be applied to improve the efficiency of the proposed \taskdiscovery framework.

\myparagraph{Creating distribution shifts.} Distributions shifts can result from e.g. spurious correlations, undersampling~\cite{wiles2021fine} or adversarial attacks~\cite{szegedy2013intriguing,kurakin2016adversarial,madry2017towards}.
To create such shifts to study the failure modes of networks, one needs to define them manually \cite{sagawa2019distributionally,liang2022metashift}.
We show that it is possible to \textit{automatically} create \textit{many} such shifts that lead to a significant accuracy drop on a given dataset using the discovered tasks (see \secref{sec:adversarial-split} and \figref{fig:adversarial-split}{-left}).
In contrast to other automatic methods to find such shifts, the proposed approach allows to creates many of them for a learning algorithm rather than for a single trained model \cite{eyuboglu2022domino}, does not require additional prior knowledge \cite{pmlr-v139-creager21a} or adversarial optimization \cite{lahoti2020fairness}, and does not change pixel values \cite{kurakin2016adversarial} or labels.

\paragraph{Data-centric analyses of learning.}
Multiple works study how training data influences the final model.
Common approaches are to measure the effect of removing, adding or mislabelling individual data points or a group of them \cite{koh2017understanding, koh2019accuracy, ilyas2022datamodels, pmlr-v89-jia19a}.
Instead of choosing which objects to train on, in task discovery, we choose how to label a fixed set of objects (i.e., tasks), s.t. a network generalizes well.

\paragraph{Transfer/Meta-/Multi-task learning.}
This line of work aims to develop algorithms that can solve multiple tasks with little additional supervision or computational cost.
They range from building novel methods \cite{jaegle2021perceiver, chowdhery2022palm, finn2017model, vinyals2016matching, garnelo2018conditional} to better understanding the space of tasks and their interplay \cite{taskonomy2018, bao2019information, raghu2019transfusion, nguyen2020leep, achille2019task2vec}.
Most of these works, however, focus on a small set of human-defined tasks.
With task discovery, we aim to build a more extensive set of tasks that can further facilitate these fields' development. 

\section{Agreement Score: Measuring Consistency of Labeling Unseen Data}
In this section, we introduce the basis for the \taskdiscovery framework: the definitions of a task and the agreement score, which is a measure of the task's generalizability.
We then provide an empirical analysis of this score and demonstrate that it differentiates human- and random-labelled tasks.

\textbf{Notation and definitions.}
Given a set of $N$ images $X = \{x_i\}_{i=1}^N, \, x \in \Xspace$, we define a \textit{task} as a binary labelling of this set $\t: X \to \Yspace$ (a multi-class extension is straightforward) and denote the corresponding labelled dataset as $\D(X, \t) = \{(x, \t(x)) | x \in X\}$.
We consider a learning algorithm $\A$ to be a neural network $f(\cdot; w):\Xspace \to [0, 1]$ with weights $w$ trained by SGD with cross-entropy loss.
Due to the inherent stochasticity, such as random initialization and mini-batching, this learning algorithm induces a distribution over the weights given a dataset $w \sim \A(\D(X, \t))$.

\subsection{Agreement Score as a Measure of Generalization}
\label{sec:as}
A standard approach to measure the generalization of a learning algorithm $\A$ trained on $\D(\Xt, \t)$ is to measure the test error on $\D(\Xte, \t)$.
The test error can be decomposed into bias and variance terms \cite{bishop2006pattern, valentini2004bias, domingos2000unified},
where, in our case, the stochasticity is due to $\A$ while the \textit{train-test split is fixed}.
We now examine how this decomposition depends on the task $\t$.
The bias term measures how much the average prediction deviates from $\t$ and mostly depends on what are the test labels on $\Xte$.
The variance term captures how predictions of different models agree with each other and does not depend on the task's test labels but only on training labels through $\D(\Xt, \t)$.
We suggest measuring the generalizability of a task $\t$ with an \textit{agreement score}, as defined below.

For a given train-test split $\Xt, \Xte$ and a task $\t$, we define the agreement score (\AStext) as:
\begin{equation}
    \label{eq:as}
    \AS(\t; \Xt, \Xte) = \E_{w_1, w_2 \sim \A(\D(\Xt, \t))} \E_{x\sim \Xte} [f(x; w_1) = f(x; w_2)],
\end{equation}
where the first expectation is over different models trained on $\D(\Xt, \t)$ and the inner expectation is averaging over the test set.
Practically, this corresponds to training two networks from different initializations on the training dataset labelled by $\t$ and measuring the agreement between these two models' predictions on the test set (see \figref{fig:pull}{-left} and the inner-loop in \figref{fig:method}{-left}).

The \AStext depends only on the task's training labels, thus, test labels are not required.
In \appendixref{app:as-accuracy-connection}, we provide a more in-depth discussion on the connection between the \AStext of a task and the test accuracy after training on it.
We highlight that in general high \AStext does not imply high test accuracy as we also demonstrate in \secref{fig:adversarial-split}.

\vspace{-5pt}
\subsection{Agreement Score Behaviour for Random- and Human-Labelled Tasks}
\vspace{-5pt}
\label{sec:as-real-random-study}

\begin{figure}
    \centering
    \includegraphics[width=\textwidth]{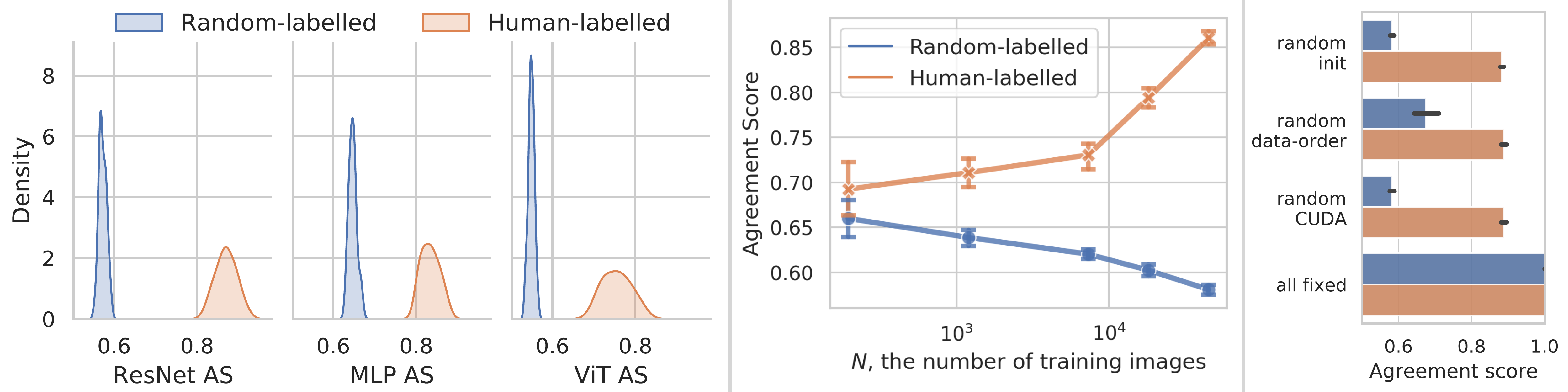}
    \caption{
    \figtitle{Agreement score for \orange{human}- and \blue{random}-labelled tasks.}
    \figpos{Left:} \AStext measured on three architectures: ResNet-18, MLP and ViT.
    \figpos{Center:} \AStext measured on ResNet-18 for different numbers of training images $N$. The standard deviation is over four random tasks and three data splits.
    \figpos{Right:} Ablating the sources of stochasticity present in $\A$.
    Each row shows when one of the 1) initialization, 2) data-order or 3) CUDA is stochastic and the other two sources are fixed, and the bottom is when all the sources are fixed (see \secref{sec:as-real-random-study}).
    {The differentiation between \orange{human}- and \blue{random}-labelled tasks stably persists across different architectures, data sizes and sources of randomness.}
    \figcutspace
    }
    \label{fig:as-study}
\end{figure}

In this section, we demonstrate that the \AStext exhibits the desired behaviour of differentiating human-labelled from random-labelled tasks. 
We take the set of images $X$ from the CIFAR-10 dataset \cite{krizhevsky2009learning} and split the original training set into 45K images for $\Xt$ and 5K for $\Xte$.
We split 10 original classes differently into two sets of 5 classes to construct 5-vs-5 binary classification tasks $\treal$.
Out of  all $\binom{10}{5}$ tasks, we randomly sub-sample 20 to form the set of \textit{human-labelled tasks} $\Tsetnamed{HL}$.
We construct the set of 20 \textit{random-labelled tasks} $\Tsetnamed{RL}$ by generating binary labels for all images randomly and fixing them throughout the training, similar to \cite{zhang2021understanding}.
We use ResNet-18 \cite{he2016deep} architecture and Adam \cite{kingma2014adam} optimizer as the learning algorithm $\A$, unless otherwise specified.
We measure the \AStext by training two networks for 100 epochs, which is enough to achieve zero training error for all considered tasks.

\myparagraph{\AStext differentiates human- from random-labelled tasks.}
\figref{fig:pull}{-center} shows that human-labelled tasks have a higher \AStext than random-labelled ones.
This coincides with our intuition that one should not expect generalization on a random task, for which \AStext is close to the chance level of 0.5.
Note, that the empirical distribution of the \AStext for random-labelled tasks (\figref{fig:as-study}{-left}) is an unbiased estimation of the \AStext distribution over all possible tasks, as $\Tsetnamed{RL}$ are uniform samples from the set of all tasks (i.e., labelings) defined on $\Xt$.
This suggests that high-\AStext tasks do not make for a large fraction of all tasks.
A high \AStext of human-labelled tasks is also consistent with the understanding that a network is able to generalize when trained on the original CIFAR-10 labels.

\myparagraph{How does the AS differ across architectures?}
In addition to ResNet-18, we measure the \AStext using MLP \cite{somepalli2022can} and ViT architectures \cite{dosovitskiy2020image} (see \appendixref{app:arch}).
\figref{fig:as-study}{-left} shows that the \AStext for all the architectures distinguishes tasks from $\Tsetnamed{RL}$ and $\Tsetnamed{HL}$ well. 
MLP and ViT have lower \AStext than ResNet-18, aligned with our understanding that convolutional networks should exhibit better generalization on this small dataset due to their architectural inductive biases.
Similar to the architectural analysis, we provide the analysis on the data-dependence of the \AStext in \secref{sec:as-test-data-domain}.

\myparagraph{How does the AS depend on the training size?}
When too little data is available for training, one could expect the stochasticity in $\A$ to play a bigger role than data, i.e., the agreement score decreases with less data.
This intuition is consistent with empirical evidence for human-labelled tasks, as seen in \figref{fig:as-study}{-center}.
However, for random-labelled tasks, the \AStext increases with less data (but they still remain distinguishable from human-labelled tasks).
One possible reason is that when the training dataset is very small, and the labels are random, the two networks do not deviate much from their initializations. This results in similar networks and consequently a high \AStext, but basically irrelevant to the data and uninformative.

\myparagraph{Any stochasticity is enough to provide differentiation.} 
We ablate the three sources of stochasticity in $\A$: 1) initialization, 2) data-order and 3) CUDA stochasticity \cite{jooybar2013deterministic}.
The system is fully deterministic with all three sources fixed and $\AS$=$1$.
Interestingly, when \textit{any} of these variables change between two runs, the \AStext drops, creating the same separation between human- and random-labelled tasks.

These empirical observations show that the \AStext well differentiates between human- and random-labelled tasks across multiple architectures, dataset sizes and the sources of stochasticity.

\section{Task Discovery via Meta-Optimization of the Agreement Score}
\label{sec:td}
As we saw in the previous section, \AStext provides a useful measure of how well a network can generalize on a given task.
A natural question then arises: \textit{are there high-AS tasks other than human-labelled ones, and what are they?}
In this section, we establish a \textit{\taskdiscovery} framework to automatically search for these tasks and study this question computationally.

\subsection{Task Space Parametrization and Agreement Score Meta-Optimization}
\label{sec:as-meta-opt}
Our goal is to find a task $\t$ that maximizes $\AS(\t)$.
It is a high-dimensional discrete optimization problem which is computationally hard to solve.
In order to make it differentiable and utilize a more efficient first-order optimization, we, first, substitute the 0-1 loss in \eqref{eq:as} with the cross-entropy loss $\xeloss$.
Then, we parameterize the space of tasks with a \textit{task network} $\tnet{\theta}: \Xspace \to [0, 1]$ and treat the \AStext as a function of its parameters $\theta$.
This basically means that we view the labelled training dataset $\D(\Xt, \t)$ and a network $\tnet{\theta}$ with the same labels on $\Xt$ as being equivalent, as one can always train a network to fit the training dataset.

Given that all the components are now differentiable, we can calculate the gradient of the \AStext w.r.t. task parameters $\nabla_\theta AS(\tnet{\theta})$ by unrolling the inner-loop optimization and use it for gradient-based optimization over the task space.
This results in a meta-optimization problem where the inner-loop optimization is over parameters $w_1, w_2$ and the outer-loop optimization is over task parameters $\theta$ (see \figref{fig:method}{-left}).

Evaluating the meta-gradient w.r.t. $\theta$ has high memory and computational costs, as we need to train two networks from scratch in the inner-loop.
To make it feasible, we limit the number of inner-loop optimization steps to 50, which we found to be enough to separate the \AStext between random and human-labelled tasks and provide a useful learning signal (see \appendixref{app:proxy-as}).
We additionally use gradient checkpointing \cite{chen2016training} after every inner-loop step to avoid linear memory consumption in the number of steps.
This allows us to run the discovery process for the ResNet-18 model on the CIFAR-10 dataset using a single 40GB A100.
See \secref{app:meta-opt} for more details.

\subsection{Discovering \emph{a Set} of Dissimilar Tasks with High Agreement Scores}
\label{sec:td-general-formulation}

The described \AStext meta-optimization results in only a single high-\AStext task, whereas there are potentially many tasks with a high \AStext.
Therefore, we would like to discover a set of tasks $\Tset = \{\tnet{\theta_1}, \dots, \tnet{\theta_K}\}$.
A naive approach to finding such a set would be to run the meta-optimization from $K$ different intializations of task-parameters $\theta$.
However, this results in a set of similar (or even the same) tasks, as we observed in the preliminary experiments.
Therefore, we aim to discover \textit{dissimilar} tasks to represent the set of all tasks with high \AStext better.
We measure the similarity between two tasks on a set $X$ as follows:
\begin{equation}
    \label{eq:sim}
    \similarity_X(\t_1, \t_2) = \max\left\{ \E_{x\sim X} [\t_1(x) = \t_2(x)], \E_{x\sim X} [\t_1(x) = 1 - \t_2(x)]\right\},
\end{equation}
where the maximum accounts for the labels' flipping.
Since this metric is not differentiable, we, instead, use a differentiable loss  $L_{\mathrm{sim}}$ to minimize the similarity (defined later in \secref{sec:td-encoder-formulation}, \eqref{eq:uniformity}).
Finally, we formulate the \taskdiscovery framework as the following optimization problem over $\Tset$:
\begin{equation}
    \label{eq:task-discovery-obj}
    \argmax_{\Tset = \{\tnet{\theta_1}, \dots, \tnet{\theta_K}\}}  \E_{\tnet{\theta} \sim \Tset} \AS(\tnet{\theta}) - \lambda \cdot L_{\mathrm{sim}}(\Tset).
\end{equation}
We show the influence of $\lambda$ on task discovery in \appendixref{app:lambda-hyperparam-as-similarity-curve}.
Note that this naturally avoids discovering trivial solutions that are highly imbalanced (e.g., labelling all objects with class 0) due to the similarity loss, as these tasks are similar to each other, and a set $\Tset$ that includes them will be penalized.

In practice, we could solve this optimization sequentially -- i.e., first find $\tnet{\theta_1}$, freeze it and add it to $\Tset$, then optimize for $\tnet{\theta_2}$, and so on. However, we found this to be slow. 
In the next section \ref{sec:td-encoder-formulation}, we provide a solution that is more efficient, i.e., can find more tasks with less compute.

\myparagraph{\emph{Regulated} task discovery.}
The task discovery formulation above only concerns with finding high-\AStext tasks -- which is the minimum requirement for having generalizable tasks.
One can introduce additional constraints to \textit{regulate/steer} the discovery process, e.g., by adding a regularizer in Eq.~\ref{eq:task-discovery-obj} or via the task network's architectural biases.
This approach can allow for a guided task discovery to favor the discovery of some tasks over the others.
We provide an example of a regulated discovery in \secref{sec:td-study-qualitative} by using self-supervised pre-trained embeddings as the input to the task network.

\subsection{Modelling The Set of Tasks with an Embedding Space}
\label{sec:td-encoder-formulation}
\begin{figure}
    \centering
    \includegraphics[width=\textwidth]{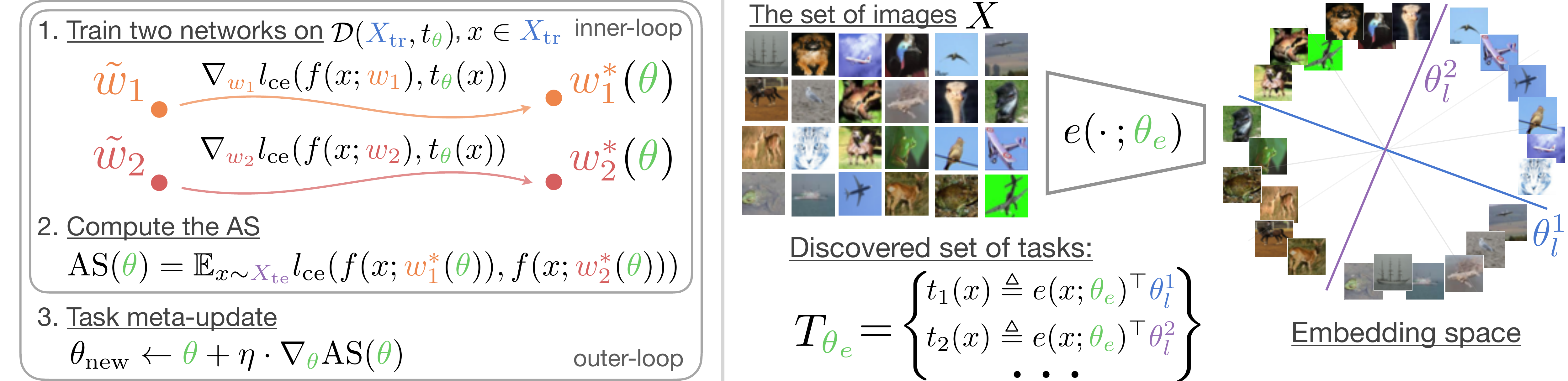}
    \caption{
    \figpos{Left:}
    Agreement score meta-optimization.
    \textit{Inner-loop:} given the task $t_{\green{\theta}} \in \Tset$, two networks $\orange{w_1}, \red{w_2}$ optimize the cross-entropy loss $\xeloss$ on the training set labelled by the task $t_{\green{\theta}}$ (\secref{sec:as-meta-opt}).
    After training, the agreement between two networks is calculated on the test set.
    \textit{Outer-loop:} the task parameters $\green{\theta}$ are updated with the meta-gradient of the \AStext.
    \figpos{Right:} the task-space paramertrization with the shared encoder (\secref{sec:td-encoder-formulation}).
    The encoder $e(\cdot\,;\green{\theta_e})$ (after projection) distributes images uniformly on the sphere in the embedding space.
    Different tasks are then formed by applying a linear classifier with weights, e.g., $\violet{\theta_l^1}$ and $\blue{\theta_l^2}$, passing through the origin.
    The corresponding set of tasks $\Tset_{\green{\theta_e}}$ consists of \emph{all} such linear classifiers in this space.
    This results in a more efficient \taskdiscovery framework than modelling each task with a separate network.
    \figcutspace
    }
    \label{fig:method}
\end{figure}

Modelling every task by a separate network increases memory and computational costs.
In order to amortize these costs, we adopt an approach popular in multi-task learning and model task-networks with a shared encoder $e(\cdot; \theta_{e}): \Xspace \to \R^d$ and a task-specific linear head $\theta_{l} \in \R^d$, so that $t_\theta(x) = e(x; \theta_{e})^\top\theta_{l}$.
See \figref{fig:method}{-right} for visualization.
Then, instead of learning a fixed set of different task-specific heads, we aim to learn an embedding space where \textit{any} linear hyperplane gives rise to a high-\AStext task.
Thus, an encoder with parameters $\theta_e$ defines the following set of tasks:\begin{equation}
    \label{eq:embedding-tasks}
    \Tset_{\theta_e} =\{\tnet{\theta}\,|\, \theta = (\theta_e, \theta_{l}), \, \theta_{l}\in\R^d\}.
\end{equation}
This set is not infinite as it might seem at first since many linear layers will correspond to the same shattering of $X$.
The size of $\Tset_{\theta_e}$ as measured by the number of unique tasks on $X$ is only limited by the dimensionality of the embedding space $d$ and the encoder's expressivity.
Potentially, it can be as big as the set of all shatterings when $d=|X|-1$ and the encoder $e$ is flexible enough \cite{vapnik2015uniform}.

We adopt a uniformity loss over the embedding space \cite{wang2020understanding} to induce dissimilarity between tasks:
\begin{equation}
    \label{eq:uniformity}
    L_{\mathrm{sim}}(\Tset_{\theta_e}) = L_{\mathrm{unif}}(\theta_e) = \log \E_{x_1, x_2} \exp\left\{\alpha \cdot \frac{e(x_1; \theta_e)^\top e(x_2; \theta_e)}{\|e(x_1; \theta_e)\| \cdot \|e(x_2; \theta_e)\|} \right\},
\end{equation}
where the expectation is taken w.r.t. pairs of images randomly sampled from the training set.
It favours the embeddings to be uniformly distributed on a sphere, and, if the encoder satisfies this property, any two orthogonal hyper-planes $\theta_l^1 \perp \theta_l^2$ (passing through origin) give rise to two tasks with the low similarity of 0.5 as measured by \eqref{eq:sim} (see \figref{fig:method}{-right}).

In order to optimize the \AStext averaged over $\Tset_{\theta_e}$ in \eqref{eq:task-discovery-obj}, we can use a Monte-Carlo gradient estimator and sample one hyper-plane $\theta_l$ at each step, e.g., w.r.t. an isotropic Gaussian distribution, which, on average, results in dissimilar tasks given that the uniformity is high.
As a result of running the task discovery framework, we find the encoder parameters $\theta_e^*$ that optimize the objective \eqref{eq:task-discovery-obj} and gives rise to the corresponding set of tasks $\Tset_{\theta_e^*}$ (see \eqref{eq:embedding-tasks} and \figref{fig:method}{-right}).

Note that the framework outlined above can be straightforwardly extended to the case of discovering multi-way classification tasks as we show in \appendixref{app:td-k-way-cifar}, where instead of sampling a hyperplane that divides the embedding space into two parts, we sample $K$ hyperplanes that divide the embedding space into $K$ regions and give rise to $K$ classes.

\myparagraph{Towards \textit{the space} of high-\AStext tasks.}
Instead of creating a set of tasks, one could seek to define \textit{a space} of high-\AStext tasks.
That is to define \emph{a basis set} of tasks and a binary operation on a set of tasks that constructs a new task and preserves the \AStext.
The proposed formulation with a shared embedding space can be seen as a step toward this direction.
Indeed, in this case, the structure over $\Tset_{\theta_e^*}$ is imposed implicitly by the space of liner hyperplanes $\theta_l\in\R^d$, each of which gives rise to a high-\AStext task.

\subsection{Discovering High-\AStext Tasks on CIFAR-10 Dataset}
\label{sec:td-cifar}
In this section, we demonstrate that the proposed framework successfully finds dissimilar tasks with high AS.
We consider the same setting as in \secref{sec:as-real-random-study}.
We use the encoder-based discovery described in \secref{sec:td-encoder-formulation} and use the ResNet-18 architecture for $e(\cdot; \theta_e)$ with a linear layer mapping to $\R^d$ (with $d=32$) instead of the last classification layer.
We use Adam as the meta-optimizer and SGD optimizer for the inner-loop optimization.
Please refer to \appendixref{app:meta-opt} for more details.

We optimize \eqref{eq:task-discovery-obj} to find $\theta_e^*$ and sample 32 tasks from $\Tset_{\theta_e^*}$ by taking $d$ orthogonal hyper-planes $\theta_l$.
We refer to this set of 32 tasks as $\Tsetnamed{ResNet}$.
The average similarity (\eqref{eq:sim}) between all pairs of tasks from this set is $0.51$, close to the smallest possible value of $0.5$.
For each discovered task, we evaluate its \AStext in the same way as in \secref{sec:as-real-random-study} (according to \eqref{eq:as}).
\figref{fig:pull}{-center} demonstrates that the proposed \taskdiscovery framework successfully finds tasks with high \AStext.
See more visualizations and analyses on these discovered tasks in \secref{sec:td-study} and \appendixref{app:vizualization}.

\myparagraph{Random networks give rise to high-AS tasks.}
Interestingly, in our experiments, we found that if we initialize a task network randomly (standard uniform initialization from PyTorch \cite{NEURIPS2019_9015}), the corresponding task (after applying softmax) will have a high \AStext, on par with human-labelled and discovered ones ($\approx 0.85$).
Different random initializations, however, give rise to very similar tasks (e.g., 32 randomly sampled networks have an average similarity of $0.68$ compared to $0.51$ for the discovered tasks. See \appendixref{app:random-net} for a more detailed comparison).
Therefore, a naive approach of sampling random networks does not result in an efficient task discovery framework, as one needs to draw prohibitively many of them to get sufficiently dissimilar tasks.
Note, that this result is specific to the initialization scheme used and not \textit{any} instantiation of the network's weights necessarily results in a high-\AStext task.

\vspace{-5pt}
\section{Empirical Study of the Discovered Tasks}
\vspace{-5pt}
\begin{figure}
    \centering
    \includegraphics[width=\textwidth]{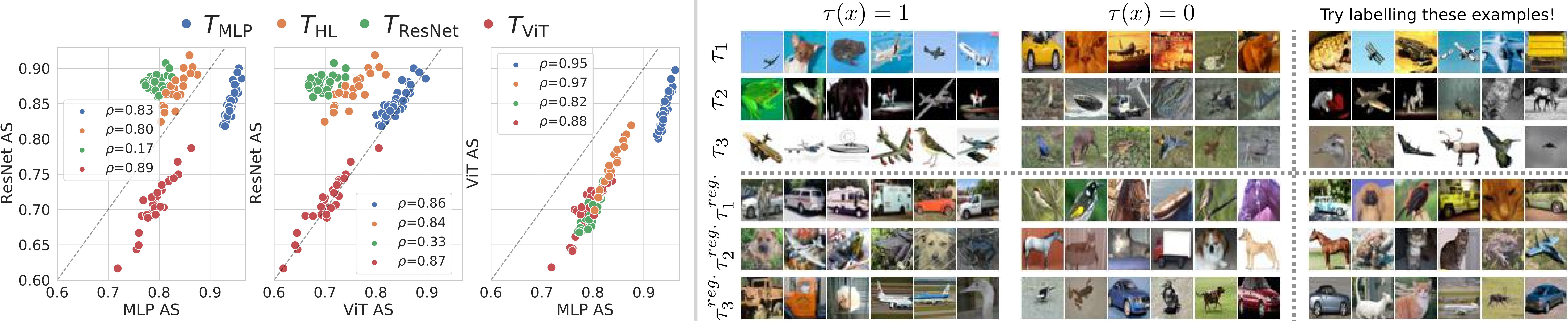}
    \caption{
    \figpos{Left:}
    The \AStext for tasks from $\textcolor{orange}{\Tsetnamed{HL}}, \textcolor{green}{\Tsetnamed{{ResNet}}}, \textcolor{blue}{\Tsetnamed{MLP}}, \textcolor{red}{\Tsetnamed{{ViT}}}$ measured on all three architectures ($x$ and $y$ axis).
    \figpos{Right:} Sample images for each class of our discovered tasks are shown in the first two columns. We show some unlabelled examples in the third column for the reader to guess the label. The answers are in the \appendixref{app:vizualization}. These images have been sampled to be the most discriminative, as measured by the network predicted probability. \textit{\textbf{Right}} (top): Some of the discovered tasks from the unregulated version of task discovery seem to be based on color patterns, e.g. $\tau_1(x)=1$ are images with similar blue color, which make sense as a learnable task and are expected (see Sec.~\ref{sec:td-study-qualitative}). \textit{\textbf{Right}} (bottom): The same, but for a regulated version where the encoder was pretrained with SimCLR. The tasks seems to correspond more to semantic tasks e.g. $\tau_1^{reg.}(x)=1$ are images of vehicles with different backgrounds. As the pretraining encourages the encoder to be invariant to color, it seems to be biased towards semantic information. Thus, the framework is able to pick up on the inductive biases from SimCLR pretraining, via the discovered tasks.
    \figcutspace
    }
    \label{fig:td-arch}
\end{figure}
\label{sec:td-study}
In this section, we first perform a qualitative analysis of the discovered tasks in \secref{sec:td-study-qualitative}.
Second, we analyze how discovered tasks differ for three different architectures and test data domains in \secref{sec:td-study-arch} and \secref{sec:as-test-data-domain}, respectively.
Finally in \secref{sec:human-interpretability}, we discuss the human-interpretability of the discovered tasks and whether human-labelled tasks should be expected to be discovered.






\subsection{Qualitative Analyses of the Discovered Tasks}
\label{sec:td-study-qualitative}
Here we attempt to analyze the tasks discovered by the proposed discovery framework. \figref{fig:td-arch}{-top-right} shows examples of images for three sample discovered tasks found in \secref{sec:td-cifar}. 
For ease of visualization, we selected the most discriminative images from each class as measured by the network's predicted probability. In the interest of space, the visuals for all tasks are in the \appendixref{app:vizualization}, alongside some post hoc interpretability analysis.

Some of these tasks seem to be based on color, e.g. the class 1 of $\tau_1$ includes images with blue (sky or water) color, and the class 0 includes images with orange color. Other tasks seem to pick up other cues. These are basically reflections of the \emph{statistical patterns present in the data} and the \emph{inductive biases of the learning architecture}.

\myparagraph{Regulated \taskdiscovery} described in \secref{sec:td-general-formulation} allows us to incorporate additional information to favor the discovery of specific tasks, e.g., ones based on more high-level concepts.
As an example, we use self-supervised contrastive pre-training that learns embeddings invariant to the employed set of augmentations \cite{hadsell2006dimensionality, he2020momentum, caron2020unsupervised, misra2020self}.
Specifically, we use embeddings of the ResNet-50 \cite{he2016deep} trained with SimCLR \cite{chen2020simple} as the input to the encoder $e$ instead of raw images, which in this case is a 2-layer fully-connected network.
Note that the AS is still computed using the raw pixels.

\figref{fig:td-arch}{-bottom-right} shows the resulting tasks. Since the encoder is now more invariant to color information due to the color jittering augmentation employed during pre-training \cite{chen2020simple}, the discovered tasks seem to be more aligned with CIFAR-10 classes; e.g. samples from $\tau_1$'s class 1 show vehicles against different backgrounds.
Note that task discovery regulated by contrastive pre-training only provides a tool to discover tasks invariant to the set of employed augmentations.
The choice of augmentations, however, depends on the set of tasks one wants to discover.
For example, one should not employ a rotation augmentation if they need to discover view-dependent tasks \cite{xiao2020should}.

\subsection{Dependency of the Agreement Score and Discovered Tasks on the Architecture}
\label{sec:td-study-arch}
In this section, we study how the \AStext of a task depends on the neural network architecture used for measuring the \AStext. We include human-labelled tasks as well as a set of tasks discovered using different architectures in this study.
We consider the same architectures as in \secref{sec:as-real-random-study}: ResNet-18, MLP, and ViT.
We change both $f$ and $e$ to be one of MLP or ViT and run the same discovery process as in \secref{sec:td-cifar}.
As a result, we obtain three sets: $\green{\Tsetnamed{ResNet}}, \blue{\Tsetnamed{MLP}}, \red{\Tsetnamed{ViT}}$, each with 32 tasks.
For each task, we evaluate its \AStext on all three architectures.
\figref{fig:td-arch}{-left} shows that high-AS tasks for one architecture do not necessary have similar AS when measured on another architecture (e.g., $\textcolor{green}{\Tsetnamed{ResNet}}$ on ViT).
For MLP and ViT architectures, we find that the \AStext groups correlate well for all groups, which is aligned with the understanding that these architectures are more similar to each other than to ResNet-18.

More importantly, we note that comparing architectures on any single group of tasks is not enough.
For example, if comparing the \AStext for ResNet-18 and MLP only on human-labelled tasks $\orange{\Treal}$, one might conclude that they correlate well ($\rho=0.8$), suggesting they generalize on similar tasks.
However, when the set $\green{\Tsetnamed{ResNet}}$ is taken into account, the conclusion changes ($\rho=0.17$).
Thus, it is important to analyse the different architectures on a broader set of tasks not to bias our understanding, and the proposed \taskdiscovery framework allows for more complete analyses.

\subsection{Dependency of the Agreement Score on the Test Data Domain}
\label{sec:as-test-data-domain}
\begin{figure}
    \centering
    \includegraphics[width=\textwidth]{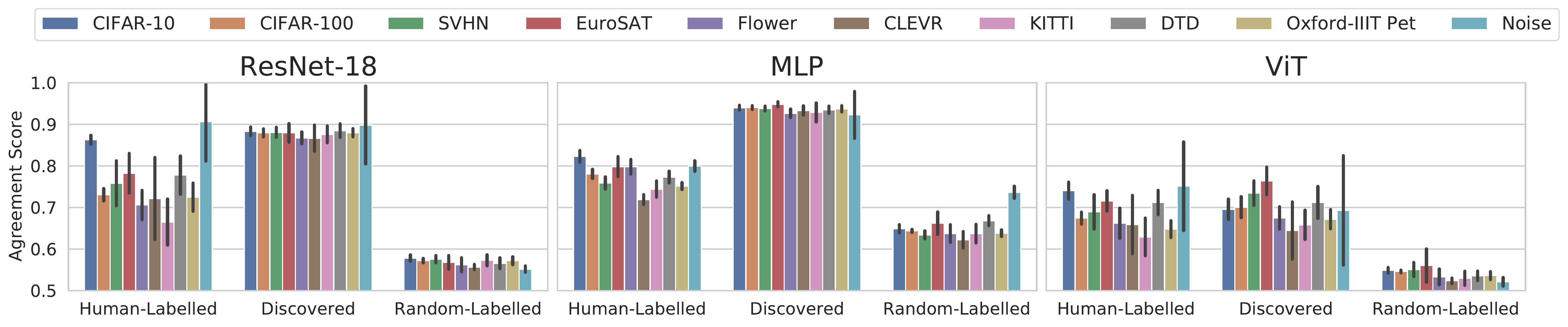}
    \caption{
    \figtitle{The dependency of the agreement score on the test data domain.}
    We fix training images $\Xt$ to be CIFAR-10 and vary $\Xte$ to measure the \AStext.
    For each architecture, we take five tasks from each set of human-labelled, random-labelled and discovered (for the same architecture) tasks and measure their \AStext on different test data domains.
    The standard deviation is over five tasks and three random seeds for each pair of a task type and test domain.
    We see that the differentiation between human-labelled and discovered tasks and random-labelled tasks persists across multiple domains.
    We also find that the \AStext of discovered tasks is more stable and stays high across all the domains, whereas the \AStext of human-labelled tasks is more volatile.
    \figcutspace
    }
    \label{fig:as-ood-data}
\end{figure}

In this section, we study how the \AStext of different tasks depends on the test data used to measure the agreement between the two networks.
We fix $\Xt$ to be images from the CIFAR-10 dataset and take test images $\Xte$ from the following datasets:
CIFAR-100 \cite{krizhevsky2009learning}, CLEVR\cite{johnson2017clevr}, Describable Textures Dataset \cite{cimpoi2014describing}, EuroSAT \cite{helber2019eurosat}, KITTI \cite{geiger2013vision}, 102 Category Flower Dataset \cite{nilsback2008automated}, Oxford-IIIT Pet Dataset \cite{parkhi2012cats}, SVHN \cite{lecun2004learning}.
We also include the AS measured on noise images sampled from a standard normal distribution.
Before measuring the AS, we standardise images channel-wise for each dataset to have zero mean and unit variance similar to training images.

The results for all test datasets and architectures are shown in \figref{fig:as-ood-data}.
First, we can see that the differentiation between human-labelled and discovered tasks and random-labelled tasks successfully persists across all the datasets.
For ResNet-18 and MLP, we can see that the \AStext of discovered tasks stays almost the same for all considered domains.
This might suggest that the patterns the discovered tasks are based on are less domain-dependent -- thus provide reliable cues across different image domains.
On the other hand, human-labelled tasks are based on high-level semantic patterns corresponding to original CIFAR-10 classes, which change drastically across different domains (e.g., there are no animals or vehicles in CLEVR images), and, as a consequence, their \AStext is more domain-dependent and volatile.

\subsection{On Human-Interpretability and Human-Labelled Tasks}
\begin{figure}
    \centering
    \includegraphics[width=0.7\textwidth]{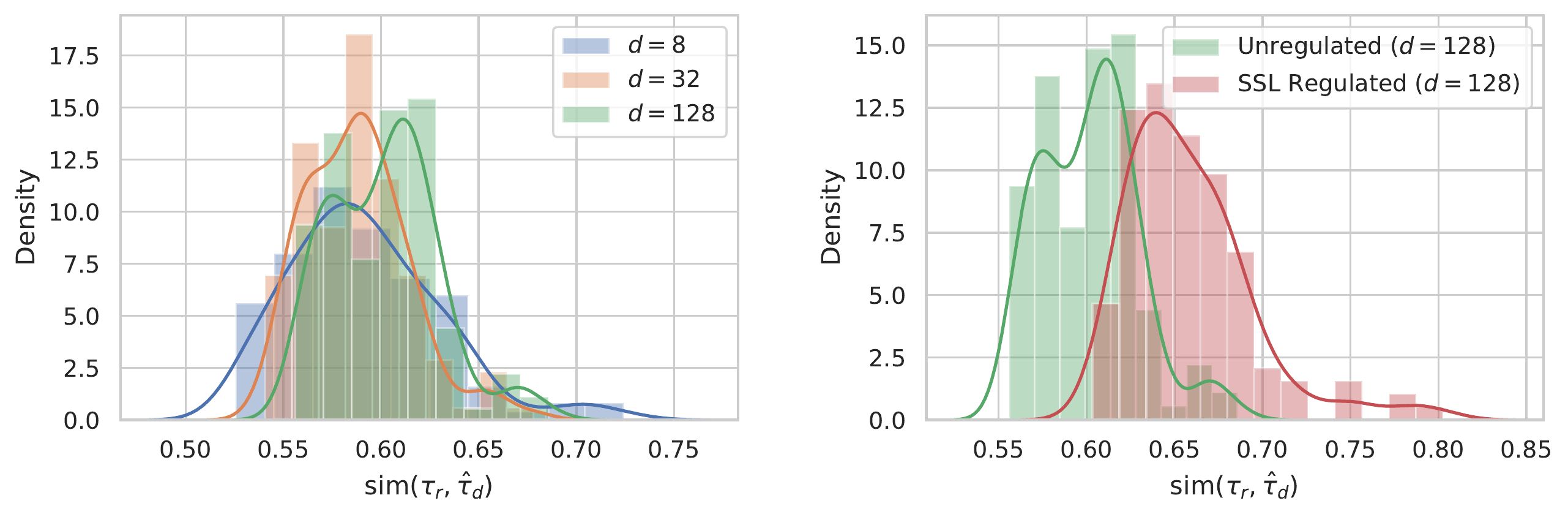}
    \vspace{-0.3cm}
    \caption{
    \figtitle{Are human-labelled tasks present in the set of discovered tasks?}
    For each available human-labelled task $\t  \in \Treal$, we find the most similar discovered task $\hat{\t}_d \in \Tset_{\theta_e^*}$ and plot the corresponding similarities $\similarity(\treal, \hat{\t}_d)$ as a distribution.
    \figpos{Left:} Unregulated task discovery with varying embedding space dimensionalities $d$.
    The similarity stays relatively low for most human-labelled tasks suggesting they are not ``fully'' included in the set of discovered tasks at the scale with which we experimented.
    \figpos{Right:} Unregulated and SSL regulated versions of \taskdiscovery with $d=128$.
    As expected, the tasks discovered by the regulated version are more similar to human-labelled tasks due to the additional inductive biases.
    \figcutspace
    }
    \label{fig:app-real-recall}
\end{figure}

\label{sec:human-interpretability}
In this section, we discuss 
\emph{i)} if the discovered tasks should be visually interpretable to humans,
\emph{ii)} if one should expect them to contain human-labelled tasks and if they do in practice, and
\emph{iii)} an example of how the discovery of human-labelled tasks can be promoted.

\myparagraph{Should the discovered high-\AStext tasks be human-interpretable?}
In this work, by ``human-interpretable tasks'', we generally refer to those classifications that humans can visually understand or learn sufficiently conveniently. Human-labelled tasks are examples of such tasks.
While we found in \secref{sec:as-real-random-study} that such interpretable tasks have a high \AStext, the opposite is not true, i.e., not \textit{all} high-\AStext tasks should be visually interpretable by a human -- as  they reflect the inductive biases of the particular learning algorithm used, which are not necessarily aligned with human perception.
The following ``green pixel task'' is an example of such a high-\AStext task that is learnable and generalizable in a statistical learning sense but not convenient for humans to learn visually.

\textbf{The \emph{``green pixel task''}.} Consider the following simple task: the label for $x$ is 1 if the pixel $p$ at a fixed position has the green channel intensity above a certain threshold and 0 otherwise.
The threshold is chosen to split images evenly into two classes.
This simple task has the \AStext of approximately $0.98$, and a network trained to solve it has a high test accuracy of $0.96$.
Moreover, the network indeed seems to make the predictions based on this rule rather than other cues: the accuracy remains almost the same when we set all pixels but $p$ to random noise and, on the flip side, drops to the chance level of 0.5 when we randomly sample $p$ and keep the rest of the image untouched.
This suggests that the network captured the underlying rule for the task and generalizes well in the statistical learning sense.
However, it would be hard for a human to infer this pattern by only looking at examples of images from both classes, and consequently, it would appear like an uninterpretable/random task to human eyes.

It is sensible that many of the discovered tasks belong to such a category. 
This indicates that finding more human-interpretable tasks would essentially require \emph{bringing in additional constraints and biases} that the current neural network architectures do not include.
We provide an example of such a promotion using the SSL regulated \taskdiscovery results below.

\myparagraph{Do discovered tasks contain human-labelled tasks?}
We observe that the similarity between the discovered and most human-labelled tasks is relatively low (see \figref{fig:app-real-recall}{-left} and \appendixref{app:real-recall} for more details).
As mentioned above, human-labelled tasks make up only a small subset of all tasks with a high \AStext.
The \taskdiscovery framework aims to find different (i.e., dissimilar) tasks from this set and not necessarily \textit{all} of them.
In other words, there are many tasks with a high \AStext other than human-labelled ones, which the proposed discovery framework successfully finds.

As mentioned above, introducing additional inductive biases would be necessary to promote finding more human-labelled tasks.
We demonstrate this by using the tasks discovered with the SimCLR pre-trained encoder (see \secref{sec:td-study-qualitative}).
\figref{fig:app-real-recall}{-right} shows that the recall of human-labelled tasks among the discovered ones increases notably due to the inductive biases that SimCLR data augmentations bring in.

\section{Adversarial Dataset Splits}
\label{sec:adversarial-split}
\begin{figure}
    \centering
    \includegraphics[width=\textwidth]{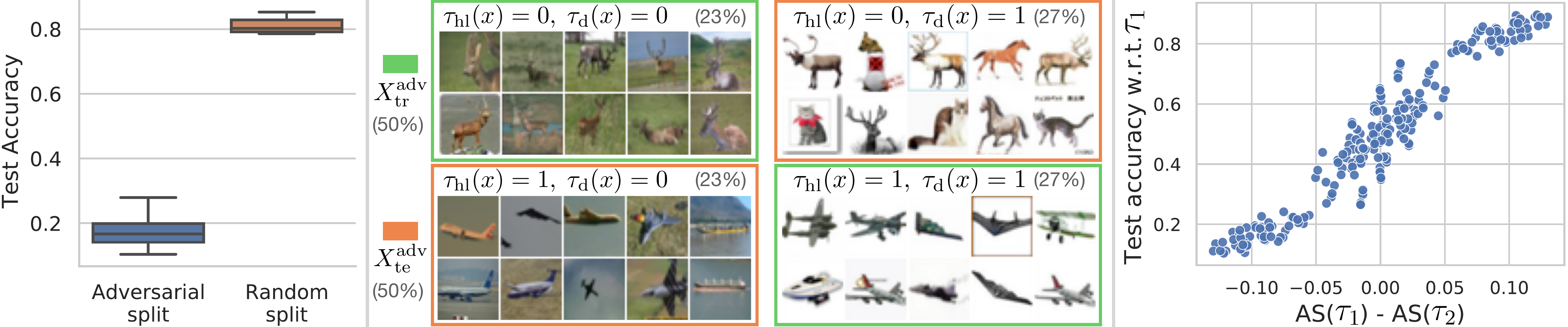}
    \caption{
    \figtitle{Adversarial splits on CIFAR-10.}
    \figpos{Left:} Test accuracy for human-labelled tasks $\treal$ on two types of splits: random and adversarial based on a $\tdiscovered$ (see the center image and \eqref{eq:adv-split}). 
    The boxplot distribution is over four $\treal$ tasks for the random split and 24  $(\treal, \tdiscovered)$ pairs for the adversarial one. Training with the adversarial split results in a {significant} test performance drop.
    \figpos{Center:} Example of an adversarial split,
    constructed such that images from \green{train} and \orange{test} sets have the opposite correlation between $\treal$ and $\tdiscovered$ (the percentage of images in each group is shown in brackets).
    The model seems to learn to make its predictions based on $\tdiscovered$ instead of $\treal$ (hence, the significant test accuracy drop).
    \figpos{Right:} Each dot corresponds to a pair of tasks $(\t_1, \t_2)$, where
    the \textit{$y$-axis} is the test accuracy after training on the corresponding adversarial split to predict $\t_1$, and the \textit{$x$-axis} is the difference in the \AStext of these two tasks.
    The plot suggests that the network \textit{favors learning the task with a higher AS}: the higher the \AStext of $\t_1$ is, the higher the accuracy w.r.t. $\t_1$ is.
    Thus, for an adversarial split to be successful, the AS of the target task $\t_1$ should be lower than the \AStext of $\t_2$.
    \figcutspace
    }
    \label{fig:adversarial-split}
\end{figure}

In this section, we demonstrate how the discovered tasks can be used to reveal biases and failure modes of a given network architecture (and training pipeline).
In particular, we introduce an \textit{adversarial split} -- which is a train-test dataset split such that the test accuracy of the yielded network significantly drops, compared to a random train-test split. This is because the \AStext can reveal what tasks a network \emph{favors} to learn, and thus, the discovered tasks can be used to construct such adversarial splits.

\subsection{Creating Adversarial Train-Test Dataset Partitions Using Discovered High-\AStext Tasks}
For a given human-labelled task $\treal$, let us consider the standard procedure of training a network on $\D(\Xt, \treal)$ and testing it on $\D(\Xte, \treal)$.
The network usually achieves a high test accuracy when the dataset is split into train and test sets randomly.
Using discovered tasks, we show how to construct \textit{adversarial} splits on which the test accuracy drops significantly.
To do that, we take a discovered task $\tdiscovered$ with high a \AStext and construct the split, s.t. $\treal$ and $\tdiscovered$ have the same labels on the training set $\Xtadv$ and the opposite ones on the test set $\Xteadv$ (see \figref{fig:adversarial-split}{-center}):
\begin{equation}
    \label{eq:adv-split}
    \Xtadv = \{x \,|\, \treal(x) = \tdiscovered(x), \, x\in X\}, \;\; \Xteadv = \{x \,|\, \treal(x) \neq \tdiscovered(x), \, x\in X\}.
\end{equation}
\figref{fig:adversarial-split}{-left} shows that for various pairs of $(\treal, \tdiscovered)$, the test accuracy on $\D(\Xteadv, \treal)$ drops significantly after training on $\D(\Xtadv, \treal)$.
This suggests the network chooses to learn the cue in $\tdiscovered$, rather than $\treal$, as it predicts $1 - \treal$ on $\Xteadv$, which coincides with $\tdiscovered$.
We note that we keep the class balance and sizes of the random and adversarial splits approximately the same (see \figref{fig:adversarial-split}{-center}).
The train and test sets sizes are approximately the same because we use discovered tasks that are different from the target human-labelled ones as discussed in \secref{sec:human-interpretability} and ensured by the dissimilarity term in the \taskdiscovery objective in \eqref{eq:task-discovery-obj}.

The discovered task $\tdiscovered$, in this case, can be seen as a spurious feature \cite{sagawa2020investigation,khani2021removing} and the adversarial split creates a spurious correlation between $\treal$ and $\tdiscovered$ on $\Xtadv$, that fools the network.
Similar behaviour was observed before on datasets where spurious correlations were curated manually~\cite{sagawa2019distributionally,liang2022metashift,wiles2021fine}.
In contrast, the described approach using the discovered tasks allows us to find such spurious features, to which networks are vulnerable, \textit{automatically}.
It can potentially find spurious correlations on datasets where none was known to exist or find new ones on existing benchmarks, as shown below.

\myparagraph{Neural networks favor learning the task with a higher AS.}
The empirical observation above demonstrates that, when a network is trained on a dataset where $\treal$ and $\tdiscovered$ coincide, it predicts the latter.
While theoretical analysis is needed to understand the cause of this phenomenon, here, we provide an empirical clue towards its understanding.

We consider a set of 10 discovered tasks
and 4 human-labelled tasks.
For all pairs of tasks $(\t_1, \t_2)$ from this set, we construct the adversarial split according to \eqref{eq:adv-split}, train a network on $\D(\Xtadv, \t_1)$ and test it on $\D(\Xteadv, \t_1)$.
\figref{fig:adversarial-split}{-right} shows the test accuracy against the difference in the agreement scores of these two tasks $\AS(\t_1) - \AS(\t_2)$.
We find that the test accuracy w.r.t. $\t_1$ correlates well with the difference in the agreement: when $\AS(\t_1)$ is sufficiently larger than $\AS(\t_2)$, the network makes predictions according to $\t_1$, and vice-versa.
When the agreement scores are similar, the network makes test predictions according to neither of them (see \appendixref{app:adversarial-splits} for further discussion).
This observation suggests that an adversarial split is successful, i.e., causes the network to fail, if the AS of the target task $\t_1$ is lower than that of the task $\t_2$ used to create the split.

\subsection{Adversarial Splits for Multi-Class Classification.}
\label{sec:adv-split-multi-class}
In this section we show one way to extend the proposed adversarial splits to a multi-way classification target task.
Let us consider a multi-class task $\t: X \to C$,  where $C=\{1, \dots, K\}$, for which we want to construct an adversarial split.
We cannot create a ``complete'' correlation between $\t$ and a binary discovered task similar to \eqref{eq:adv-split} as they have different numbers of classes.
On the other hand, using a $K$-way discovered task will result in having too few images in the corresponding train set (where $\tdiscovered(x) = \t(x)$).
Instead, we suggest creating only a partial correlation between $\t$ and a binary discovered task $\tdiscovered$.
To do that, we split the set of all classes $C$ into two disjoint sets $C_1, C_2$ and create the an adversarial split in the following way:
\begin{equation}
    \label{eq:k-way-adv}
    \Xtadv = \{x \,|\, \mathbb{I}[\t(x) \in C_{\boldsymbol{1}}] = \tdiscovered(x), \, x\in X\}, \;\; \Xteadv = \{x \,|\, \mathbb{I}[\t(x) \in C_{\boldsymbol{2}}] = \tdiscovered(x), \, x\in X\},
\end{equation}
In this case, $\tdiscovered(x)$ does not equate $\t(x)$ on $\Xtadv$, but it is predictive of whether $\t(x)\in C_1$ or $\t(x)\in C_2$, creating a partial correlation.
Intuitively, if $\tdiscovered$ provides a good learning signal, the network can learn to distinguish between $C_1$ and $C_2$ first and then perform classification inside each set of classes.
In this the case, the test set will fool the network as it has the correlation opposite to the train set.

\tabref{tab:adv-split} shows results of attacking the original 10-way semantic classification task from CIFAR-10.
We can see that the proposed adversarial splits based on the discovered high-\AStext tasks generalize well to the multi-class classification case, i.e., the accuracy drops noticeably compared to random splits.
Note that the network, in this case, is trained on only half the CIFAR-10 original training set. Hence, the accuracy is 0.8 for a random split as opposed to around 0.9 when trained on the full dataset.

\subsection{Adversarial Splits for ImageNet and CelebA via Random-Network Task.}
\begin{table}
    \caption{
    The test set accuracy of networks trained on the original multi-class tasks for CIFAR-10 and ImageNet, and on \texttt{hair\_blond} attribute for CelebA.
    To create adversarial splits, we used discovered tasks for the CIFAR-10 dataset and tasks corresponding to randomly-initialized networks for ImageNet and CelebA.
    The class balance and the sizes of train and test splits are kept the same for random and adversarial splits.
    The standard deviation is over training runs, class splits for CIFAR-10 and ImageNet (see \secref{sec:adv-split-multi-class}) and different random splits.
    The test performance drops significantly for all three datasets when trained and tested on adversarial splits.
    \figcutspace
    }
    \label{tab:adv-split}
    \centering
    \begin{tabular}{@{}lccc@{}}
    \toprule
    Split       & CIFAR-10      & CelebA           & ImageNet, top-1 \\ \midrule
    Random      & $0.78\pm0.04$ & $0.94\pm0.00$    & $0.59\pm0.01$      \\
    Adversarial & $0.41\pm0.10$ & $0.42\pm0.02$    & $0.29\pm0.00$      \\ \bottomrule
    \end{tabular}
\end{table}
We demonstrate the effectiveness of the proposed approach and create adversarial splits for ImageNet~\cite{deng2009imagenet} and CelebA~\cite{liu2015deep}, a popular dataset for research on spurious correlations.
To get a high-AS task, we utilize the phenomenon observed in \secref{sec:td-cifar} and use a randomly initialized network as $\tdiscovered$.
We use ResNet-50 for ImageNet and ResNet-18 for CelebA.
We do not use augmentations for training.
Both adversarial and random splits partition the original dataset equally into train and test, and, therefore, the networks are trained using half of the original training set.
\tabref{tab:adv-split} shows that adversarial splits created with the task corresponding to a randomly initialized network lead to a significant accuracy drop compared to random train-test splits.
See \appendixref{app:adv-split-celeba-imgnet-viz} for visualization of these adversarial splits.

\vspace{-6pt}
\section{Conclusion}
\vspace{-6pt}

In this work, we introduce \emph{\taskdiscovery}, a framework that finds tasks on which a given network architecture generalizes well.
It uses the Agreement Score (\AStext) as the measure of generalization and optimizes it over the space of tasks to find generalizable ones.
We show the effectiveness of this approach and demonstrate multiple examples of such generalizable tasks.
We find that these tasks are not limited to human-labelled ones and can be based on other patterns in the data.
This framework provides an empirical tool to analyze neural networks through the lens of the tasks they generalize on and can potentially help us better understand deep learning.
Below we outline a few research directions that can benefit from the proposed task discovery framework.

\textbf{Understanding neural networks' inductive biases.}
Discovered tasks can be seen as a reflection of the inductive biases of a learning algorithm (network architectures, optimization with SGD, etc.), i.e., a set of preferences that allow them to generalize on these tasks.
Therefore, having access to these biases in the form of concrete tasks could help us understand them better and guide the development of deep learning frameworks.

\textbf{Understanding data.}
As discussed and shown, task discovery depends on, not only the learning model, but also the data in hand.
Through the analysis of discovered tasks, one can, for example, find patterns in data that interact with a model's inductive biases and affect its performance, thus use the insights to guide dataset collection.

\textbf{Generalization under distribution shifts.}
The \AStext turned out to be predictive of the cues/tasks a network favors in learning. The consequent adversarial splits (\secref{sec:adversarial-split}) provide a tool for studying the biases and generalization of neural networks under distribution shifts.
They can be constructed automatically for datasets where no adverse distribution shifts are known and help us to build more broad-scale benchmarks and more robust models.

\vspace{-6pt}
\section{Limitations}
\vspace{-6pt}
The proposed instantiation of a more general \taskdiscovery framework has several limitations, which we outline below, along with potential approaches to address them.

\textbf{Completeness:} the set of discovered tasks does not necessarily include \emph{all} of the tasks with a high \AStext.
Novel optimization methods that better traverse different optima of the optimization objective \eqref{eq:task-discovery-obj}, e.g., \cite{parker2020ridge, lorraine2021lyapunov}, and further scaling are needed to address this aspect.
Also, while the proposed encoder-based parametrization yields an efficient \taskdiscovery method, it imposes some constraints on the discovered tasks, as there might not exist an encoder such that the corresponding set of tasks $\Tset_{\theta_e}$ contains \emph{all and only} high-\AStext ones.

\textbf{Scalability:} 
the \taskdiscovery framework relies on an expensive meta-optimization, which limits its applicability to large-scale datasets.
This problem can be potentially addressed in future works with recent advances in efficient meta-optimization methods \cite{lorraine2020optimizing, raghu2021meta} as well as employing effective approximations of the current processes.

\textbf{Interpretability:}
As we discussed and demonstrated, analysis of discovered high-\AStext tasks can shed more light on how the neural networks work.
However, it is the expected behavior that not all of these tasks may be easily interpretable or ``useful'' in the conventional sense (e.g. as an unsupervised pre-training task). This is more of a consequence of the learning model under discovery, rather than the task discovery framework itself. Having discovered tasks that exhibit such properties requires additional inductive biases to be incorporated in the learning model. This was discussed in \secref{sec:human-interpretability}.

\vspace{-4pt}
\paragraph{Acknowledgment:}
We thank Alexander Sax, Pang Wei Koh, Roman Bachmann, David Mizrahi and Oğuzhan Fatih Kar for the feedback on earlier drafts of this manuscript. We also thank Onur Beker and Polina Kirichenko for the helpful discussions.

\bibliographystyle{plain} 
\bibliography{reference} 

\appendix

\newpage

\section{Task Discovery on Tiny ImageNet}
\label{app:tiny-imganet-discovery}
\begin{figure}
    \centering
    \includegraphics[width=0.95\textwidth]{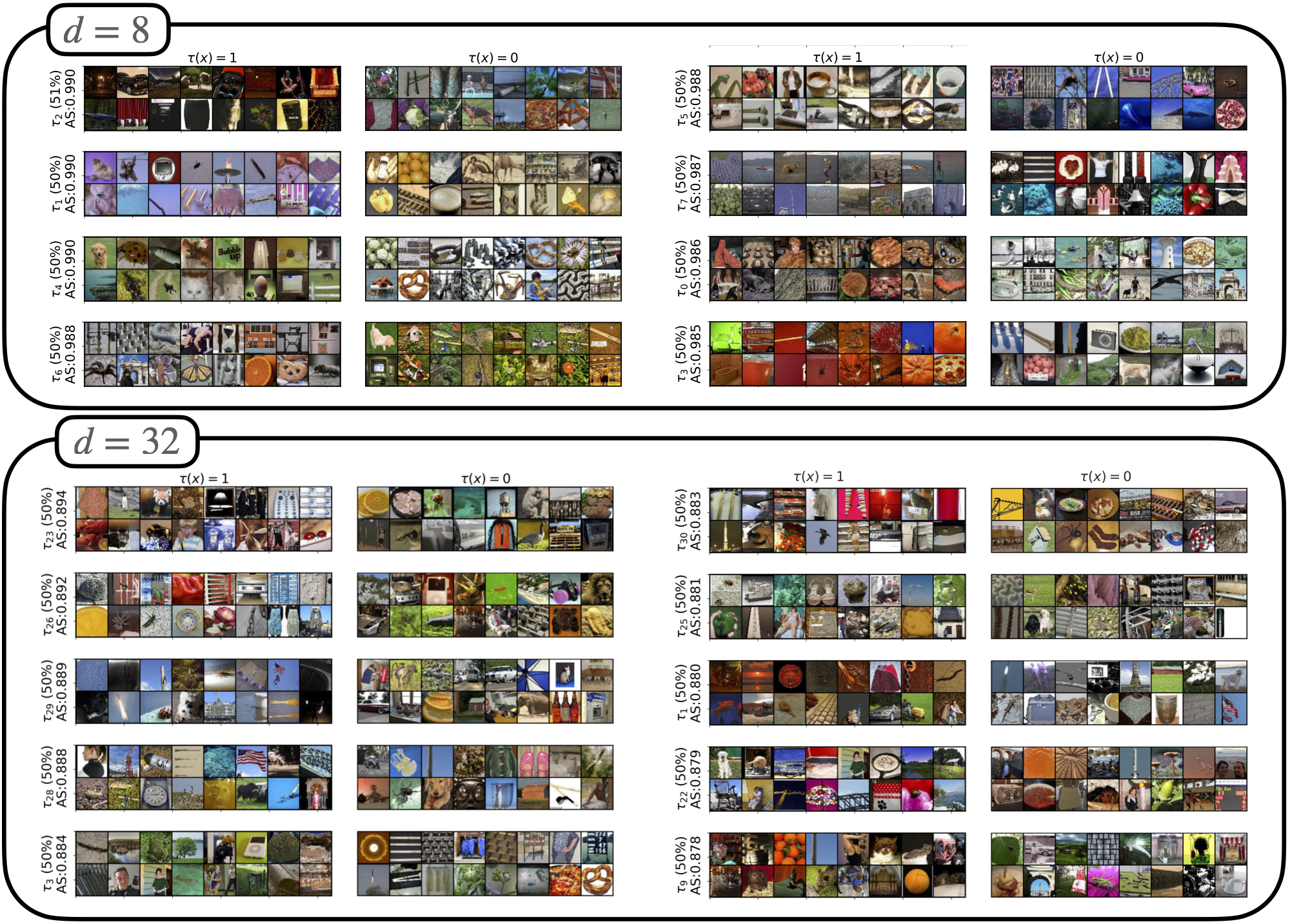}
    \caption{
    \figtitle{Tasks Visualization for TinyImageNet} for $d=\{8,32\}$. The top 10 tasks for each $d$, as measured by \AStext is shown (if $d<10$, then $d$ tasks are selected). Each column and row shows selected images from a task, for class 1 ($\tau(x)=1$) and class 0 ($\tau(x)=0$). In the $y$-axis, we show the fraction of images in class 1 in brackets and the \AStext for that task. The images for each class have been selected to be the most discriminative, as measured by the network's predicted probabilities.
    }
    \label{fig:tiny-imagenet-tasks}
\end{figure}
We extend our experimental setting by scaling it along the dataset size axis and run the proposed task discovery framework on the TinyImageNet dataset \cite{chrabaszcz2017downsampled}.
We use the same framework with the shared embedding space formulation with $d\in \{8, 32\}$ and ResNet18 with global pooling after the last convolutional layer.
\figref{fig:tiny-imagenet-tasks} shows examples of discovered tasks.
All discovered tasks have \AStext above 0.8, whereas human-labelled binary tasks (constructed using original classes in the same way as for CIFAR-10 as described in Sec.~3.2) have \AStext below 0.65. We assume that this may be due to the fact that TinyImageNet contains semantically close classes. Binary task can assign different labels for these classes, which forces the agreement to be lower because it is more difficult for model learn on how to discriminate these classes.


\section{Vizualization of the Tasks Discovered on CIFAR-10}
\label{app:vizualization}
\begin{figure}
    \centering
    \includegraphics[width=\textwidth]{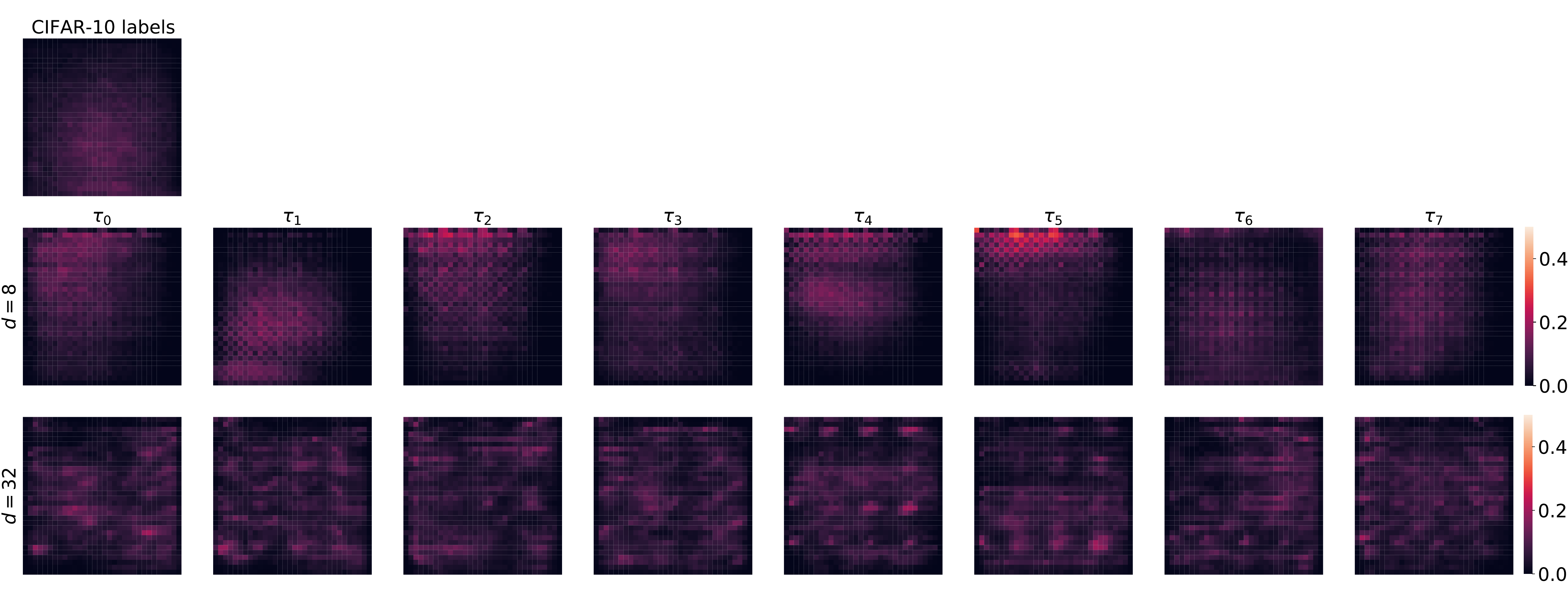}
    \caption{
    \figtitle{Sufficient input subsets for tasks networks for $d=\{8,32\}$.} Each plot shows the heatmap of the frequency that each pixel was selected to be in the sufficient subset. The task network with $d=8$ seems to attend to more macro features in the image, while with $d=32$, it seems to use cues for generalization that are scatter over the image. 
    }
    \label{fig:sis}
\end{figure}

We provide a complete set of visuals from task networks with different embedding space dimensions ($d = \{8, 32, 128\}$). The top 10 tasks for each $d$, as measured by AS, are shown in~\figref{fig:task_viz_samples}{} and the full set of tasks can be found in the \texttt{supmat\_taskviz} folder in the supplementary material. Visuals for the regulated version of task discovery, with an encoder pre-trained with SimCLR~\cite{chen2020simple}, is also shown in~\figref{fig:task_viz_samples}. The task network used for visualizations in Fig. 4 in the main paper has $d = 8$. As in the main paper, the images for each class have been selected to be the most discriminative, as measured by the network’s predicted probabilities. As $d$ increases, the less interpretable the tasks seem to be. With lower values of $d$, the tasks seem to be based on color patterns. Refer to~\secref{sec:human-interpretability} for a discussion as to whether it is reasonable to expect these tasks to be interpretable.

We also attempt to analyze how the task network makes its decisions. To do this, we use a post hoc interpretability method, Sufficient Input Subsets (SIS) \cite{carter2019made}. SIS aims to find the minimal subset of the input, i.e. pixels, whose values suffice for the model to make the same decision as on the original input. Running SIS gives us a ranking of each pixel in the image. We assign the value 1 to the top 5\% of the pixels and 0 to all the others. Thus, each image results in a binary mask. Similar to \cite{carter2021overinterpretation}, we aggregate these binary masks for each task for tasks networks with $d = \{8, 32\}$ to get the heatmaps shown in~\figref{fig:sis}. This allows us to see if different tasks are biased towards different areas of the image. Note that this only tells us where the network may be looking at in the image, but not how it is using that information. For comparison, we also show the SIS map from training on the CIFAR-10 original classification task and a random-labelled task. As expected, the heatmap for CIFAR-10 is roughly centred, as objects in the images are usually in the centre. Furthermore, increasing $d$ results in heatmaps that are more scattered. This seems to suggest that task networks discovered with a smaller embedding size use cues in the image that are more localized.

The labels of the images in Fig. 4 (last column) of the main paper are as follows: $\tau_1=[0, 1, 0, 1, 1, 0]$, $\tau_2=[1, 1, 1, 0, 0, 0]$, $\tau_3=[0, 0, 1, 1, 1, 0]$, $\tau_1^{reg.}=[1, 0, 0, 1, 0, 1]$, $\tau_2^{reg.}=[0, 1, 0, 0, 1, 1]$, $\tau_3^{reg.}=[0, 1, 1, 1, 0, 0]$. 

\begin{figure}
    \centering
    \includegraphics[width=\textwidth]{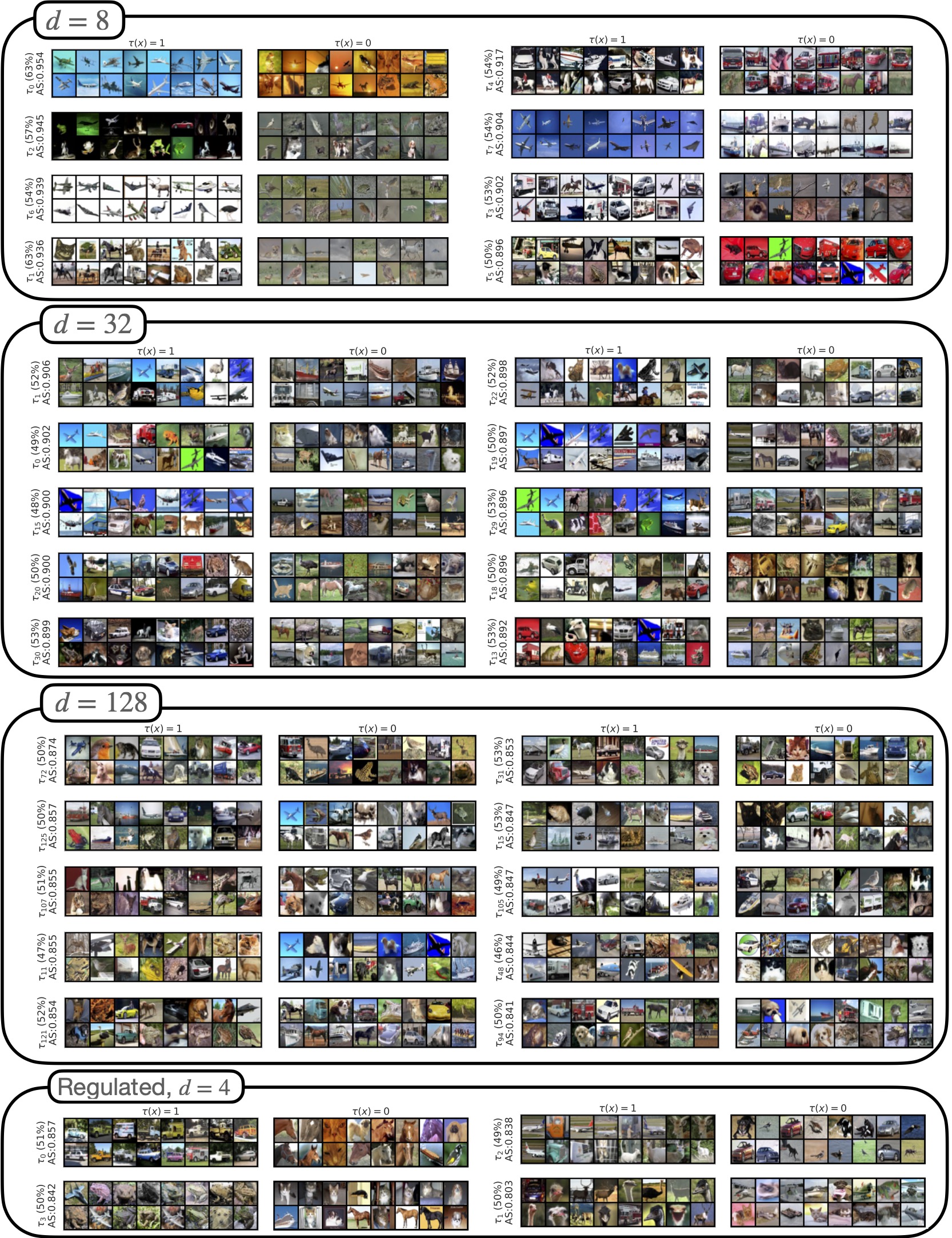}
    \caption{
    \figtitle{Tasks visualizations} for $d=\{8,32,128\}$ and the regulated version of task discovery. The top 10 tasks for each $d$, as measured by \AStext is shown (if $d<10$, then $d$ tasks are selected). Each column and row shows selected images from a task, for class 1 ($\tau(x)=1$) and class 0 ($\tau(x)=0$). In the $y$-axis, we show the fraction of images in class 1 in brackets and the \AStext for that task. The images for each class have been selected to be the most discriminative, as measured by the network's predicted probabilities.
    For unregulated tasks, the higher $d$ is, the less (immediately) interpretable the tasks seem to be. For low $d$, the tasks seem to be based on color patterns. On the other hand, the regulated tasks seem to be based on semantic information.
    Refer to~\secref{sec:human-interpretability} for a discussion as to whether it is reasonable to expect these tasks to be interpretable.
    }
    \label{fig:task_viz_samples}
\end{figure}

\section{Discovering Multi-Class Classification Tasks}
\label{app:td-k-way-cifar}
\begin{figure}
    \centering
    \includegraphics[width=0.95\textwidth]{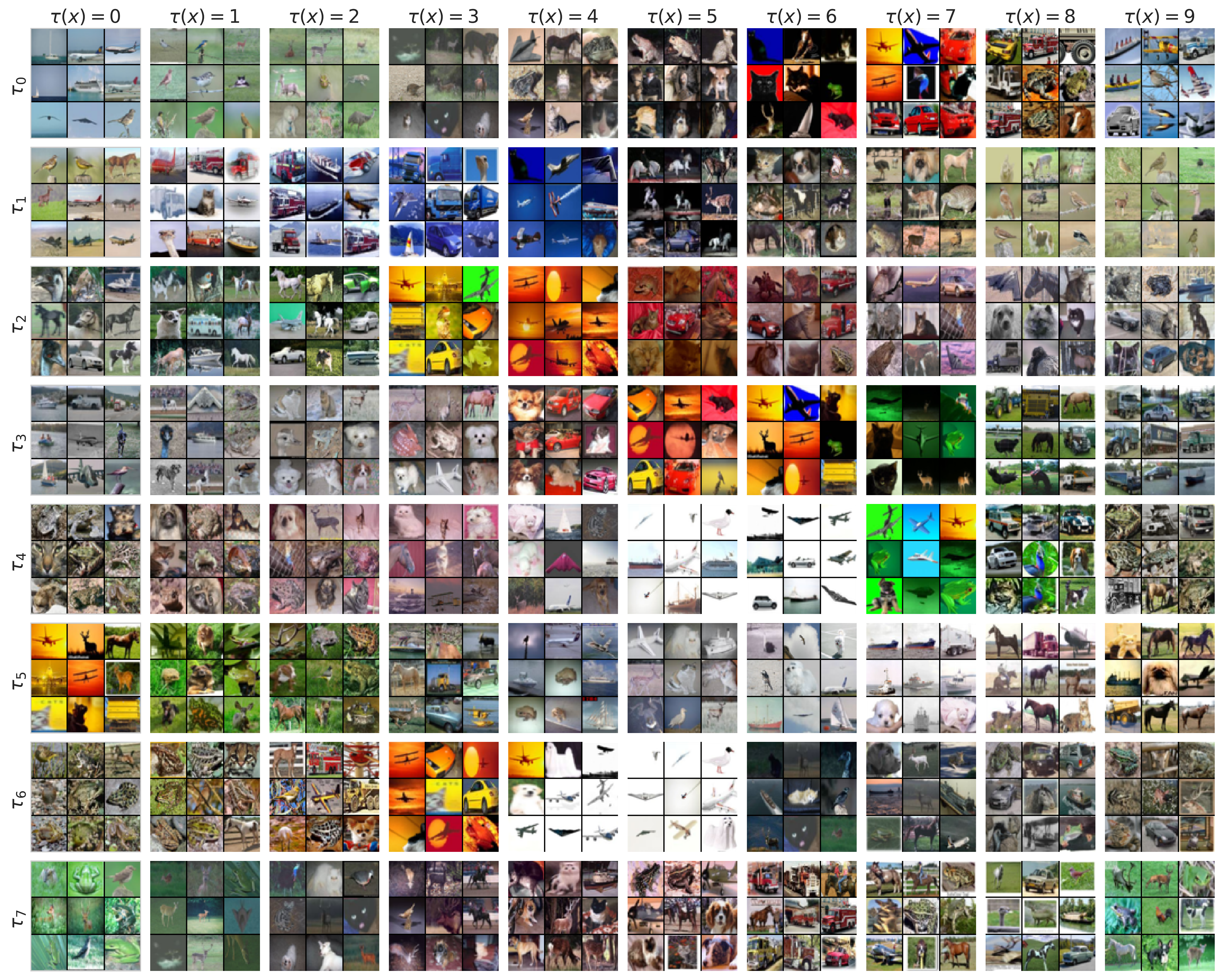}
    \caption{
    \figtitle{Visualisation of 10-way Task Discovery on CIFAR-10.} Each column and row shows selected images from a task, for class $i=\{0,\ldots 9\}$ ($\tau(x)=i$). The images for each class have been selected to be the most discriminative, as measured by the network's predicted probabilities.
    }
    \label{fig:td-k-way-cifar}
\end{figure}

In this section, we show how the \taskdiscovery framework with a shared embedding space proposed in 
\secref{sec:td-encoder-formulation}
can be extended to the case of multi-class classification tasks.
That is, instead of discovering high-AS binary tasks, we are interested in finding a $K$-way classification task $\tau:\Xspace \to \{0, \dots, K\}$.
Note that the formulation of the agreement score (Eq.1 in the main paper) and its differentiable approximation 
(see \secref{sec:as-meta-opt} and \secref{app:meta-opt}) 
remain almost the same.
To model the set of tasks $T$, we use the same shared encoder formulation as in \secref{sec:td-encoder-formulation}, but with the linear layer predicting $K$ logits, i.e., $\theta_l \in \R^{d\times K}$ and optimize the same loss with the uniformity regularizer.
In order to sample tasks where classes are balanced (a similar number of images in each class), we construct a linear layer $\theta_l = [\theta_l^1, \dots, \theta_l^K]$, s.t. $\angle(\theta_l^i, \theta_l^{i+1}) = 2\pi/K$, by randomly sampling $\theta_l^1$ and a direction along which we rotate it.
Note, that if the uniformity constraint is satisfied, then the fraction of images corresponding to each class will be $\angle(\theta_l^i, \theta_l^{i+1})/2\pi = 1/K$, i.e., classes will be balanced.

\section{Adversarial Splits Vizualization for CelebA and ImageNet}
\label{app:adv-split-celeba-imgnet-viz}
\figref{fig:adv-split-celeba-imgnet-viz}{} shows examples of images from the adversarial splits for CelebA and ImageNet datasets.
\begin{figure}
    \centering
    \includegraphics[width=\textwidth]{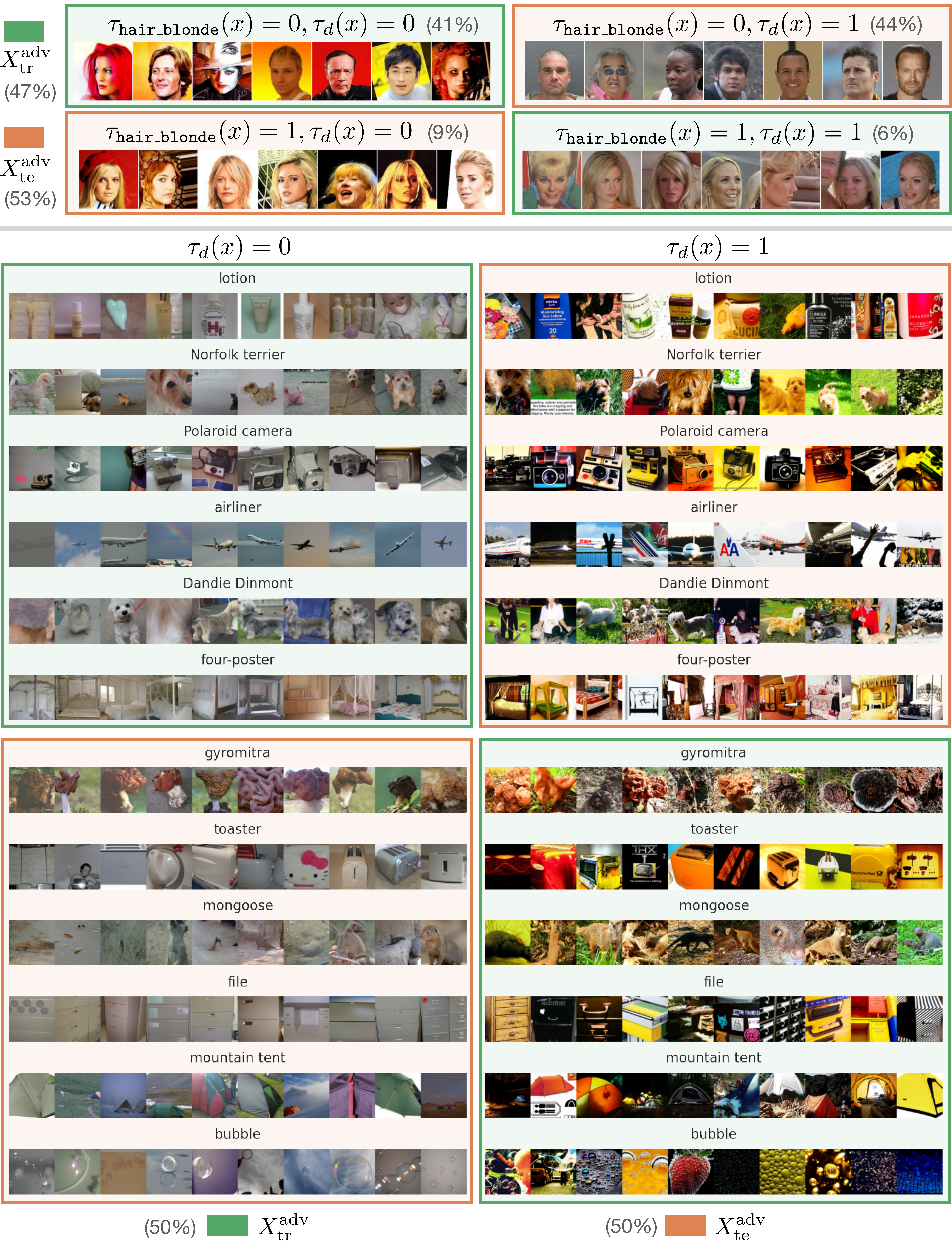}
    \caption{Visualization of adversarial \green{train}-\orange{test} splits for CelebA (top) and ImageNet (bottom) datasets.
    To construct these splits we use high-\AStext task corresponding to a randomly initialized network (see \secref{sec:td-cifar} and \secref{app:random-net}).
    }
    \label{fig:adv-split-celeba-imgnet-viz}
\end{figure}

\section{$\lambda$ Hyperparameter: Trade-off between the Agreement Score and Similarity}
\begin{figure}
    \centering
    \includegraphics[width=0.5\textwidth]{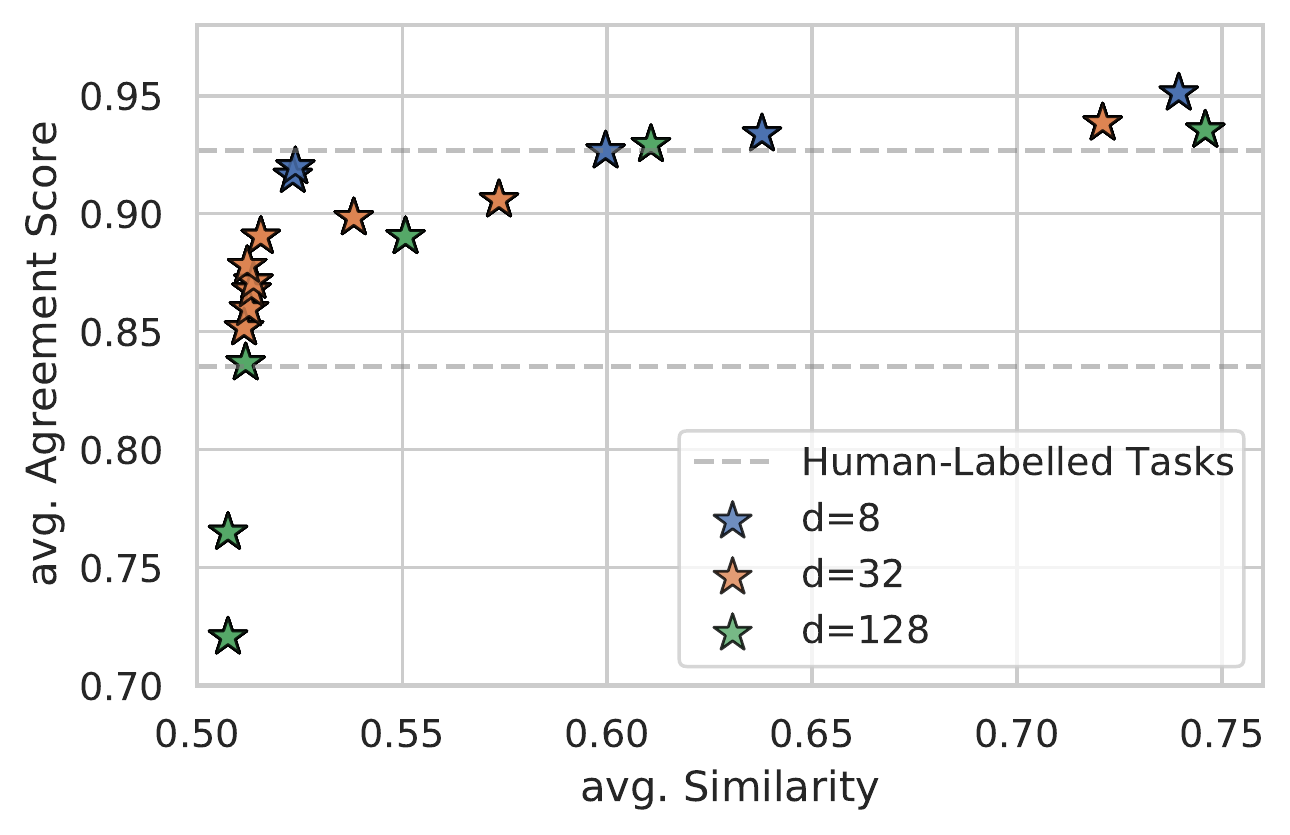}
    \caption{
    \figtitle{The Trade-off between the Agreement Score and Similarity induced by the $\lambda$ hyperparameter.}
    Each dot corresponds to a set of task $T$ discovered with the corresponding embedding space dimensionality and parameter $\lambda$ (dots with higher similarity have higher value of $\lambda$).
    The $x$ coordinate is computed by measuring the maximum similarity for each task in $T$ to the tasks from the same set and averaging these values, i.e.,
    $1/|T| \sum_{t\in T} \max_{t' \in T} \similarity(t, t')$.
    The $y$ coordinate is the average \AStext over tasks from $T$.
    We can see that the method is not very sensitive to $\lambda$ (for the considered dimensionalities) and most runs have high average \AStext even when similarity is low.
    }
    \label{fig:as-similarity-curve}
\end{figure}
\label{app:lambda-hyperparam-as-similarity-curve}
The \taskdiscovery optimization objective (Eq.~3) has two terms: 1) average agreement score over the set of discovered tasks $T$ and 2) similarity loss, measuring how similar tasks are to each other.
The hyperparameter $\lambda$ controls the balance between these two losses.
If $\lambda = 0$, a trivial solution would be to have all the tasks the same with the highest possible AS.
On the other hand, when $\lambda \to \infty$, tasks in $T$ will be dissimilar to each other with lower average AS (since the number of high-AS tasks is limited).
One can tune the hyperparameter to fit a specific goal based on whether many different tasks are needed or a few but with the highest agreement score.

\figref{fig:as-similarity-curve} shows how such a trade-off looks in practice for the shared embedding formulation with different dimensionalities of the embedding space.
We can see that, in fact, the proposed method for task discovery is not very sensitive to the choice of $\lambda$ as the \AStext mostly stays high even when the similarity is close to its minimum value.

\section{Can Networks Trained on Adversarial Splits Generalize?}
\label{app:adversarial-splits}
\begin{figure}
    \centering
    \includegraphics[width=0.6\textwidth]{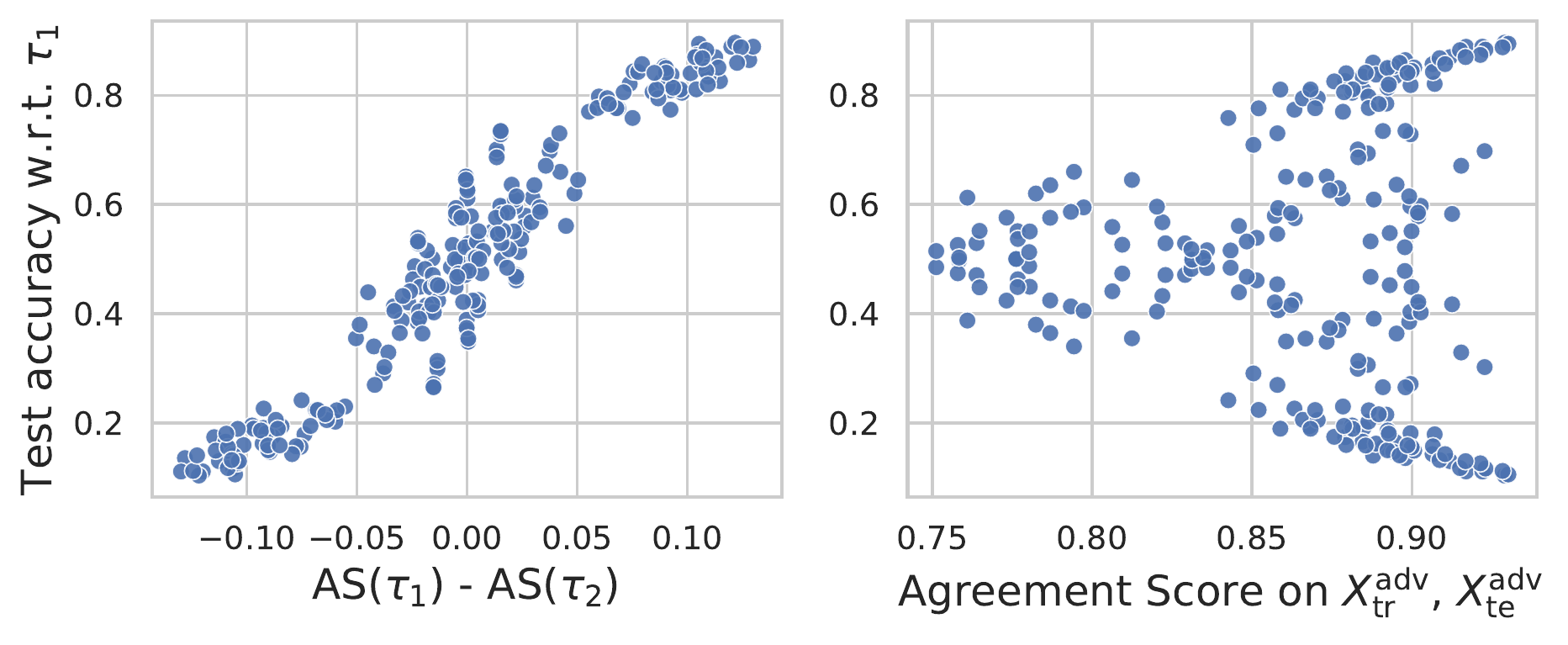}
    \caption{
    \figpos{Left:} Each dot corresponds to a pair of tasks $(\t_1, \t_2)$.
    The \textit{$y$-axis} is the test accuracy on $\D(\Xteadv, \t_1)$ after training on the corresponding adversarial split $\D(\Xtadv, \t_1)$.
    The \textit{$x$-axis} is the difference in the agreement score of two tasks averaged over random train-test splits.
    This is the same plot as in \figref{fig:adversarial-split}{-right}.
    \figpos{Center:} The same data but with \textit{$x$-axis} being the \AStext measured on the adversarial split, i.e., $\AS(\t_1; \Xtadv, \Xteadv)\, (=\AS(\t_2; \Xtadv, \Xteadv))$.
    The symmetry on the plots is due to the interchangeable role of $\t_1$ and $\t_2$ as they are binary tasks, which only changes the value of the test accuracy ($\mathrm{acc} \to \mathrm{1-acc}$).
    We see that even when the test accuracy is close to 0.5 w.r.t. to both $\t_1$ and $\t_2$, the \AStext stays high in most cases, suggesting that networks generalize well when trained on $\D(\Xtadv, \t_1)$ but w.r.t. other labels on test images from $\Xteadv$ (see Proposition~\ref{prop:as-bound}).
    }
    \label{fig:app-adv-splits}
\end{figure}

In \secref{sec:adversarial-split} of the main paper, we introduce adversarial train-test splits that ``fool'' a network, significantly reducing the test accuracy compared to randomly sampled splits.
However, it remains unclear whether networks can potentially exhibit generalization when trained on these adversarial splits w.r.t. other test labels or if the behaviour is similar to training on a random-labelled task, i.e., different networks converge to different solutions.
In order to rule out the latter hypothesis, we measure the \AStext of a task on the adversarial train-test split and show that it stays high, i.e., the task stays generalizable (in the view of Proposition~\ref{prop:as-bound}). 

Recall that for a pair of tasks $(\t_1, \t_2)$, the adversarial split is defined as follows:
\begin{equation}
    \label{eq:app-adv-split}
    \Xtadv = \{x \,|\, \t_1(x) = \t_2(x), \, x\in X\}, \;\; \Xteadv = \{x \,|\, \t_1(x) \neq \t_2(x), \, x\in X\},
\end{equation}
the only difference with the definition from \secref{sec:adversarial-split}, \eqref{eq:adv-split}, is that here we consider a general case with $\t_1,\,\t_2$ being any two tasks not only a human-labelled or discovered.
In \secref{sec:adversarial-split}, we show that the test accuracy on $\D(\Xtadv, \t_1)$ seems to correlate well with the difference in agreement scores between two tasks (see \figref{fig:app-adv-splits}).
In particular, when the agreement scores are close to each other, i.e., the difference is close to zero, the test accuracy w.r.t. both tasks is close to $0.5$.
We further investigate if networks, in this case, can generalize well w.r.t. some other labelling of the test set $\Xteadv$, i.e., whether the agreement score on this split is high.

\figref{fig:app-adv-splits}{-right} shows that even when the test accuracy is close to $0.5$, the \AStext measured on the adversarial split, i.e., $\AS(\t_1; \Xtadv, \Xteadv)$, stays high in most cases.
This observation means that networks trained on $\D(\Xtadv, \t_1)$ do generalize well in the sense of Proposition~\ref{prop:as-bound}. That is, they converge to a consistent solution, which, however, differs from both $\t_1$ and $\t_2$.

\section{The Connection between the Agreement Score and Test Accuracy}
\label{app:as-accuracy-connection}
In this section, we establish the connection between the test accuracy and the agreement score for the fixed train-test split $\Xt, \Xte$, mentioned in \secref{sec:as}.
We show that the agreement score provides a lower bound on the ``best-case'' test accuracy but cannot predict test accuracy in general.

For a task $\t$ and a train-test split $\Xt, \Xte$, we consider the test accuracy to be the accuracy on the test set $\D(\Xte, \t)$ averaged over multiple models trained on the same training dataset $\D(\Xt, \t)$:
\begin{equation}
    \label{eq:acc}
    \ACC(\t; \Xt, \Xte) = \E_{w\sim \A(\D(\Xt, \t))} \E_{x\sim \Xte} [f(x; w) = \t(x)],
\end{equation}
where $\E_{x\sim \Xte}$ stands for averaging over the test set.
We will further omit $\Xt, \Xte$ and simply use $\ACC(\t)$ since the split remains fixed.
Recall that for a learning algorithm $\A$, we define the agreement score (AS) as follows (see \secref{sec:as}):
\begin{equation}
    \AS(\t; \Xt, \Xte) = \E_{w_1, w_2 \sim \A(\D(\Xt, \t))} \E_{x\sim \Xte} [f(x; w_1) = f(x; w_2)].
\end{equation}

Intuitively, when the agreement score is high, models trained on $\D(\Xt, \t)$ make similar predictions, and, therefore, there should be some labelling of test data $\Xte$, such that the accuracy w.r.t. these labels is high.
We formalize this intuition in the following proposition.
\begin{prop}
\label{prop:as-bound}
For a given train-test split $\Xt, \Xte$, a task defined on the training set $\t: \Xt \to \{0, 1\}$ and a learning algorithm $\A$, the following inequalities hold for any $h:\Xte \to \Yspace$:
\begin{equation}
    \label{eq:as-bounds}
    \ACC(\hat{\t}) \geq \AS(\t) \geq 2\ACC(\Tilde{\t}) - 1,
\end{equation}
where
\[
    \hat{\t} = \left\{\begin{array}{lr}
        \t(x),& x\in \Xt \\
        \argmax_c \E_{w\sim \A(\D(\Xt, \t))} [f(x; w) = c],& x \in \Xte
        \end{array}\right.,
\]
and
\[
    \Tilde{\t} = \left\{\begin{array}{lr}
        \t(x),& x\in \Xt \\
        h(x),& x \in \Xte
        \end{array}\right..
\]
\end{prop}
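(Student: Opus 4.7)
The plan is to reduce everything on both sides to functionals of the per-test-example prediction distribution $p_c(x) := \Pr_{w \sim \A(\D(\Xt, \t))}[f(x; w) = c]$ on $\Xte$, and then verify two elementary pointwise inequalities. The one preliminary observation I would make up front is that $\hat{\t}$ and $\Tilde{\t}$ both coincide with $\t$ on $\Xt$, so the labelled datasets $\D(\Xt, \hat{\t})$, $\D(\Xt, \Tilde{\t})$ and $\D(\Xt, \t)$ are literally identical. Consequently $\A$ induces the same distribution over weights in all three cases, and $\ACC(\hat{\t})$, $\ACC(\Tilde{\t})$ and $\AS(\t)$ can all be expressed through the same $p_c(x)$.

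Concretely, using independence of $w_1$ and $w_2$ in the inner expectation of the agreement score, I would write
\[
\AS(\t) = \E_{x \sim \Xte}\!\left[p_0(x)^2 + p_1(x)^2\right], \quad \ACC(\hat{\t}) = \E_{x \sim \Xte}\max_c p_c(x), \quad \ACC(\Tilde{\t}) = \E_{x \sim \Xte} p_{h(x)}(x),
\]
where the formula for $\ACC(\hat{\t})$ uses the definition $\hat{\t}(x) = \argmax_c p_c(x)$ on $\Xte$.

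For the upper bound $\ACC(\hat{\t}) \ge \AS(\t)$ I would invoke the pointwise estimate
\[
p_0(x)^2 + p_1(x)^2 \;\le\; (p_0(x) + p_1(x))\max_c p_c(x) \;=\; \max_c p_c(x),
\]
and take expectation over $\Xte$. For the lower bound $\AS(\t) \ge 2\ACC(\Tilde{\t}) - 1$, writing $p := p_{h(x)}(x)$ so that $p_0(x)^2 + p_1(x)^2 = p^2 + (1-p)^2$, the identity
\[
\bigl(p_0(x)^2 + p_1(x)^2\bigr) - \bigl(2 p_{h(x)}(x) - 1\bigr) \;=\; 2(p-1)^2 \;\ge\; 0
\]
gives the inequality at every $x \in \Xte$; integrating over $\Xte$ closes the bound. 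Because $h$ enters only through $p_{h(x)}(x)$, this holds uniformly over all $h:\Xte \to \Yspace$.

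I do not expect any genuine obstacle here. The proof is essentially the two one-line algebraic facts above, stitched together by the single conceptual point that modifying the labels on $\Xte$ does not alter the trained-weight distribution $\A(\D(\Xt, \t))$. That is exactly what lets the agreement score, which is defined from $\t$ alone and never uses test labels, already encode enough information to sandwich the test accuracy of the ``best-case'' labeling $\hat{\t}$ from below and the test accuracy of any labeling $\Tilde{\t}$ from above.
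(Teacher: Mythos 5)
Your proof is correct, and it takes a genuinely cleaner and more centralized route than the paper's. The paper proves the upper bound by manipulating nested expectations directly: it first shows $\ACC(\hat{\t}) = \E_{x}\max_c \E_w[f(x;w)=c]$, then observes this dominates $\E_x \E_w[f(x;w)=f(x;w_2)]$ for each fixed $w_2$, and averages over $w_2$. For the lower bound, the paper invokes the indicator-valued triangle-type inequality $[a=b] \ge [a=c]+[b=c]-1$ for $a,b,c\in\{0,1\}$ and pushes it through the expectations. In contrast, you collapse everything to the per-test-point prediction distribution $p_c(x)$: using independence of $w_1,w_2$ to write $\AS(\t)=\E_x[p_0(x)^2+p_1(x)^2]$, the upper bound becomes the pointwise fact $p_0^2+p_1^2\le\max_c p_c$ (since $p_0+p_1=1$), and the lower bound becomes the pointwise identity $p^2+(1-p)^2-(2p-1)=2(1-p)^2\ge 0$. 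This buys you a tighter view: your lower-bound identity shows exactly how much slack the paper's inequality has, namely $2\,\E_x(1-p_{h(x)}(x))^2$, whereas the paper's argument via $[a=b]\ge[a=c]+[b=c]-1$ obscures this. Both approaches rely, explicitly in your case and implicitly in the paper's, on $w_1$ and $w_2$ being drawn independently from $\A(\D(\Xt,\t))$, and both correctly use the observation you highlight up front — that $\hat\t$, $\Tilde\t$ and $\t$ agree on $\Xt$, so the induced weight distribution is the same throughout. One small point worth keeping in mind if you generalize to more than two classes: your lower-bound identity $p^2+(1-p)^2-(2p-1)=2(1-p)^2$ uses binary-ness ($p_0+p_1=1$) in an essential way, while the paper's indicator inequality $[a=b]\ge[a=c]+[b=c]-1$ holds for any finite label set; in the multi-class case the paper's route extends verbatim, whereas yours would need the weaker pointwise bound $\sum_c p_c^2 \ge 2p_{h(x)}-1$, which still holds (it rearranges to $\sum_c p_c^2 - 2p_{h} + 1 \ge (p_h-1)^2 \ge 0$) but is no longer an exact identity.
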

\begin{proof}
1) Let us, first, show that the first inequality holds true.
\begin{align*}
    \ACC(\hat{\t}) 
    &= \E_{w\sim \A(\D(\Xt, \t))} \E_{x\sim \Xte} [f(x; w) = \hat{\t}(x)]\\
    &= \E_{w\sim \A(\D(\Xt, \t))} \E_{x\sim \Xte} [f(x; w) = \argmax_c \E_{\Bar{w}\sim \A(\D(\Xt, \t))} [f(x; \Bar{w}) = c]]\\
    &=  \E_{x\sim \Xte}  \E_{w\sim \A(\D(\Xt, \t))} [f(x; w) = \argmax_c \E_{\Bar{w}\sim \A(\D(\Xt, \t))} [f(x; \Bar{w}) = c]]\\
    &=  \E_{x\sim \Xte}  \max_c \E_{w\sim \A(\D(\Xt, \t))} [f(x; w) = c]
\end{align*}
Due to the maximum, for any $h:\:\Xte\to \{0, 1\}$, the following holds for any $x \in \Xte$:
\begin{equation*}
\max_c \E_{w\sim \A(\D(\Xt, \t))} [f(x; w) = c] \geq \E_{w\sim \A(\D(\Xt, \t))} [f(x; w) = h(x)],
\end{equation*}
and, therefore:
\begin{equation}
    \label{eq:max-ineq}
    \E_{x\sim \Xte}  \max_c \E_{w\sim \A(\D(\Xt, \t))} [f(x; w) = c] \geq \E_{x\sim \Xte} \E_{w\sim \A(\D(\Xt, \t))}  [f(x; w) = h(x)].
\end{equation}
In particular, \eqref{eq:max-ineq} holds for $f(\cdot\,; w_2)$ for any $w_2\sim\A(\D(\Xt, \t))$, and, therefore:

\begin{gather*}
    \ACC(\hat{\t}) \geq \E_{x\sim \Xte} \E_{w\sim \A(\D(\Xt, \t))}  [f(x; w) = f(x; w_2)]\\
    \E_{w_2\sim \A(\D(\Xt, \t))} \ACC(\hat{\t}) \geq \E_{w_2\sim \A(\D(\Xt, \t))} \E_{x\sim \Xte} \E_{w\sim \A(\D(\Xt, \t))}  [f(x; w) = f(x; w_2)]\\
    \ACC(\hat{\t}) \geq \E_{w, w_2 \sim \A(\D(\Xt, \t))} \E_{x\sim \Xte} [f(x; w) = f(x; w_2)]\\
    \ACC(\hat{\t}) \geq \AS(\t)\;[=\AS(\hat{\t})]
\end{gather*}

2) Note that  $\forall a,b,c\in \{0, 1\}$ holds $[a=b] \geq [a=c]+[b=c] -1$.
Then:
\begin{align*}
    \AS(\t) 
    &= \E_{w_1, w_2 \sim \A(\D(\Xt, \t))} \E_{x\sim \Xte} [f(x; w_1) = f(x; w_2)]\\
    &\geq \E_{w_1, w_2 \sim \A(\D(\Xt, \t))} \E_{x\sim \Xte} [f(x; w_1) = \Tilde{\t}] +  [\Tilde{\t} = f(x; w_2)] - 1\\
    &\geq \E_{x\sim \Xte} \left( \E_{w_1\sim \A(\D(\Xt, \t))} [f(x; w_1) = \Tilde{\t}] +  \E_{w_2\sim \A(\D(\Xt, \t))} [\Tilde{\t} = f(x; w_2)]\right) - 1\\
    &=\ACC(\Tilde{\t}) + \ACC(\Tilde{\t}) - 1\\
    &= 2 \ACC(\Tilde{\t}) - 1
\end{align*}
\end{proof}

Proposition~\ref{prop:as-bound} establishes a connection between the \AStext and the test accuracy measured on the same test set $\Xte$.
It can be easily seen from the first inequality in \eqref{eq:as-bounds} that the \AStext provides only a \textit{necessary} condition for high test accuracy, and, therefore, \AStext cannot be predictive of the test accuracy in general \cite{jiang2021assessing} as was also noted in \cite{kirsch2022note}.
Indeed, test accuracy will be high if test labels are in accordance with $\hat{\t}$ and low if, for example, they correspond to $1-\hat{\t}$. 

As can also be seen from the second inequality in \eqref{eq:as-bounds}, one can directly optimize the $\ACC(\Tilde{\t})$ w.r.t. $\Tilde{\t}$ (both the training and test labels) to find a generalizable task.
However, compared to optimizing the \AStext, this would additionally require modelling test labels of the task on $\Xte$, whereas \AStext only requires training labels on $\Xt$ and, hence, has fewer parameters to optimize.

\section{On Random Networks as high-\AStext Tasks}
\label{app:random-net}
\begin{figure}
    \centering
    \includegraphics[width=\textwidth]{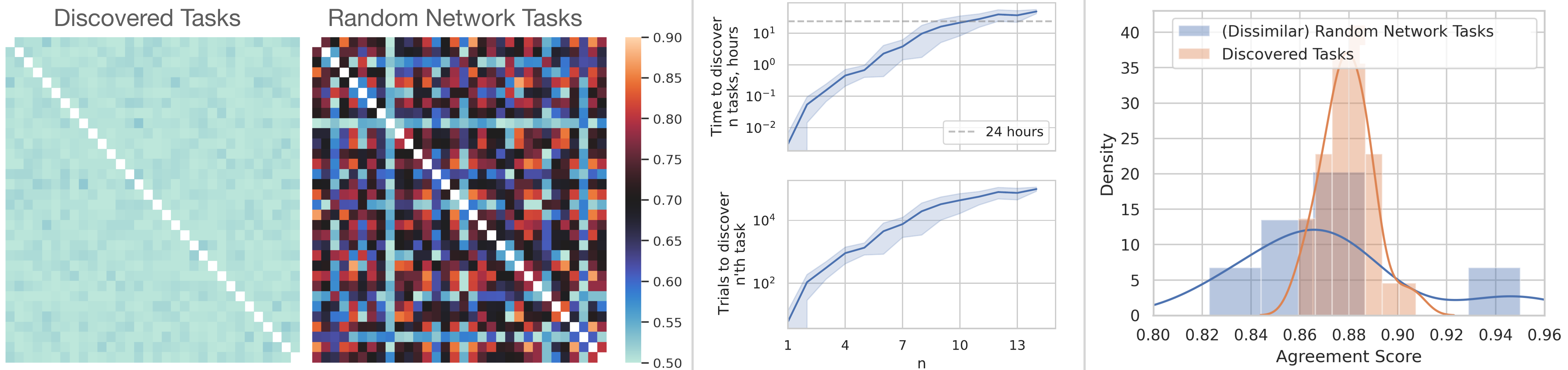}
    \caption{
    \figpos{Left:} The pairwise similarity matrix between 32 tasks from the set of discovered and random network tasks.
    \textbf{We see that random network tasks are much more similiar to each other.}
    \figpos{Center:} Sampling random networks to discover dissimilar tasks.
    $x$-axis: the discovered task number, $y$-axis: hours to discover $n$ tasks (top) and the number of generated random networks to obtain $n$th dissimilar task (bottom).
    We use the similarity threshold $0.55$ for this experiment.
    The standard deviation is over 5 runs with different initial seeds.
    \figpos{Right:} The distribution of the \AStext for the set of the found (dissimilar) random network tasks and discovered tasks from \secref{sec:td-cifar}.
    We see that while random networks give rise to high-\AStext tasks, we cannot control for their similarity and, hence, it is and inefficient \taskdiscovery framework.
    Also, the \AStext drops a bit compared to tasks discovered with the proposed \taskdiscovery framework.
    }
    \label{fig:app-random-net}
\end{figure}

In \secref{sec:td-cifar} of the main paper, we found that the \AStext of a randomly initialized network is high.
This section provides more details about this experiment and additional discussion of the results.

\subsection{Constructing the Task Corresponding to a Randomly Initialized Network}
\label{app:random-net-as}
When initializing the network randomly and taking the labels after the softmax layer, the corresponding task is usually (as found empirically) unbalanced with all labels being either 0 or 1, which trivially is a high-\AStext task but not of interest.
We, therefore, apply a slightly different procedure to construct a task corresponding to a randomly initialized network.
First, we collect the logits of the randomly initialized network for each image and sort them.
Then, we split images evenly into two classes according to this ordering, which results in the corresponding labelling from a \textit{random network task}.
\figref{fig:app-random-net}{-right} shows the \AStext for a set of such tasks (we discuss how we construct this set in the next \secref{app:random-net-td}).
We can see that tasks corresponding to randomly initialized networks indeed have high \AStext. 
We use the same ResNet-18 architecture with the same initialization scheme as for measuring the \AStext (see \secref{app:experimental-setup}).

\subsection{Drawing Random Networks as a Naive Task Discovery Method}
\label{app:random-net-td}
Based on the observation made in \secref{app:random-net-as}, one could suggest drawing random networks as a naive method for \taskdiscovery.
The major problem, however, is that two random network tasks are likely to have high similarity in terms of their labels (see \figref{fig:app-random-net}{-left}), whereas in \taskdiscovery we want to find a set of more diverse tasks.
A simple modification to do so is to sequentially generate random network tasks as described in \secref{app:random-net-as}, and keep the task if its maximum similarity to previously saved tasks is below a certain threshold.
We perform this procedure with the threshold of $0.55$. 
\figref{fig:app-random-net}{-center} shows that this naive discovery method is inefficient.
For comparison, our proposed \taskdiscovery from \secref{sec:td-cifar} takes approximately 24 hours to train.
Also, we can see that the \AStext of (dissimilar) random network tasks seems to deteriorate compared to the set of tasks discovered via meta-optimization.


\section{Experimental Setup For Task Discovery}
\label{app:experimental-setup}
In this section, we provide more technical details on the proposed \taskdiscovery framework, including a discussion on a differentiable version of the \AStext in \secref{app:proxy-as}, meta-optimization setting in \secref{app:meta-opt} and the description of architectures in \secref{app:arch}.

\subsection{Modifying the \AStext for Meta-Optimization}
\label{app:proxy-as}

\begin{figure}
    \centering
    \includegraphics[width=\textwidth]{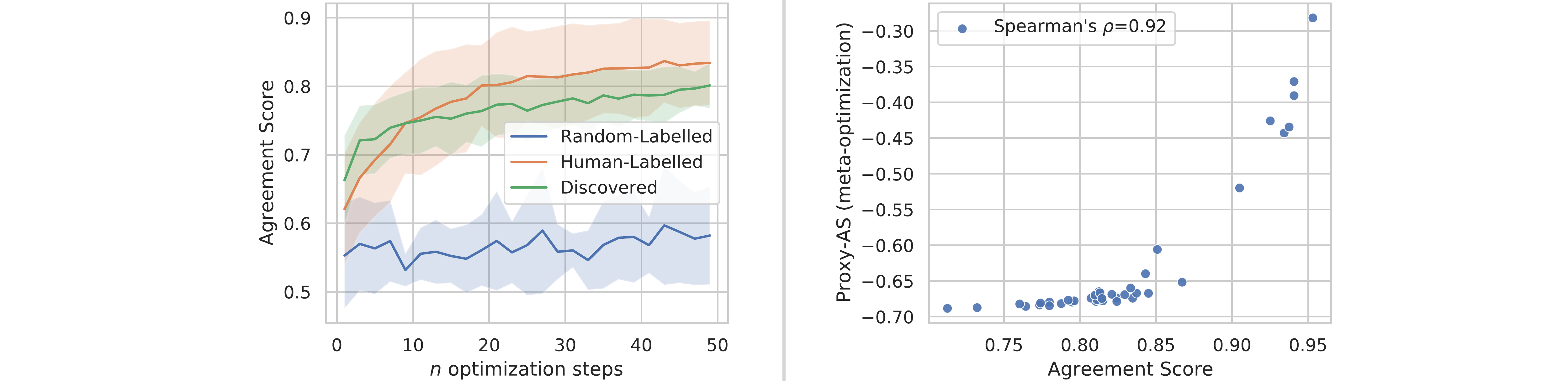}
    \caption{
    \figtitle{Agreement score approximation for meta-optimization.}
    \figpos{Left:}
    The agreement score at every iteration of the inner-loop optimization of two networks for \blue{random-labelled}, \orange{human-labelled} and \green{discovered} tasks.
    Standard deviation for each group is over 5 tasks and 3 seeds (accounts for initialization and data-order).
    We can see that the differentiation between high-AS (\orange{human-labelled} and \green{discovered}) tasks and \blue{random-labelled} occurs early in training and the \AStext after only 50 steps can be reliably used for \taskdiscovery.
    \figpos{Right:}
    The dependence between the proxy-\AStext (50 steps/soft labels/(negative)cross-entropy agreement/SGD) and the original Agreement Score (100 epochs/hard labels/0-1 agreement/Adam).
    We see that the two correlate well with Spearman's rank correlation of 0.92 making it a good optimization proxy.
    }
    \label{fig:proxy-as}
\end{figure}

In order to be able to optimize the \AStext defined in \secref{sec:as} we need to make it differentiable and feasible, i.e., reduce its computational and memory costs.
We achieve this by applying the following changes to the original \AStext:
\begin{itemize}
    \item We use (negative) cross-entropy loss instead of the 0-1 loss to measure the agreement between two networks after training.
    \item To train two networks in the inner-loop, we use ``soft'' labels of the task-network $t_\theta$, i.e., probabilities, instead of ``hard'' $\{0, 1\}$ labels. Since the task networks influence the inner-loop through the training labels, we can backpropagate through these ``soft'' labels to the parameters $\theta$.
    \item We use SGD instead of Adam optimizer in the inner-loop as it does not require storing additional momentum parameters and, hence, has lower memory cost and is more stable in the meta-optimization setting.
    \item We limit the number of inner-loop optimization steps to 50.
\end{itemize}
We refer to the corresponding \AStext with these modifications as \textit{proxy-\AStext}.
In this section, we validate whether the proxy-\AStext is a good objective to optimize the original \AStext.

First, we verify that 50 steps are enough to provide the differentiation between human-labelled and random-labelled tasks.
\figref{fig:proxy-as}{-left} shows that the differentiation between high-\AStext human-labelled and discovered tasks and random-labelled tasks occurs early in training, and 50 steps are enough to capture the difference.
The results of \taskdiscovery also support this as the found tasks have high \AStext when two networks are trained till convergence for 100 epochs (see \figref{fig:pull} of the main paper).
Further, \figref{fig:proxy-as}{-right} shows that the proxy-\AStext (with all modifications applied) correlates well with the original \AStext and, therefore, makes for a good optimization proxy.

\subsection{Task Discovery Meta-Optimization Details}
\begin{algorithm}
\caption{Pseudo-Code for Task Discovery with Shared Embedding Space}\label{alg:xas}
    \begin{algorithmic}
        \State initialize the task-encoder weights $\theta_e^0$
        \State initialize orthogonal task-specific linear heads $\{\theta_l^i\}_{i=1}^{d}$
        \For{$t$ in $0,\dots, T-1$} \Comment{outer-loop}
            \State $\theta_l \sim \{\theta_l^i\}_{i=1}^{d}$ \Comment{sample a task-specific head}
            \State $w_1^0, w_2^0 \gets p(w)$ \Comment{initialize two networks weights randomly}
            \State $L_{\mathrm{AS}}(\theta_e^t), L_{\mathrm{unif}}(\theta_e^t) \gets 0, 0$
            \For{$k$ in $0, \dots, K-1$} \Comment{inner-loop (assume less than 1 epoch)}
                \State sample a training batch $[x_1, \dots, x_M]$
                \State $L_{\mathrm{AS}}(\theta_e^t) \gets L_{\mathrm{AS}}(\theta_e^t) + \frac{1}{M} \sum_{i=1}^M \xeloss(f(x_i; w_1^k), f(x_i; w_2^k))$ \Comment{$x_i$s are novel for $f$s}
                \State $h_i \gets e(x_i;\,\theta_e^t), \, i=1,\dots,M$ 
                \State $L_{\mathrm{unif}}(\theta_e^t) \gets L_{\mathrm{unif}}(\theta_e^t) + \mathcal{L}_{\mathrm{unif}}(h)$
                \State $t_i \gets \sigma(\theta_l^\top h_i), \, \, i=1,\dots,M$ \Comment{get task ``soft'' labels}
                \State $w_1^{k+1} \gets w_1^{k} - \alpha \cdot \nabla_w \frac{1}{M} \sum_{i=1}^M \xeloss(f(x_i; w_1^k), t_i)$
                \State $w_2^{k+1} \gets w_2^{k} - \alpha \cdot \nabla_w \frac{1}{M} \sum_{i=1}^M \xeloss(f(x_i; w_2^k), t_i)$
            \EndFor
            \State $\theta_e^{t+1} \gets \theta_e^t - \eta \cdot \nabla_{\theta_e} (L_{\mathrm{AS}}(\theta_e^t) + L_{\mathrm{unif}}(\theta_e^t))$ \Comment{use accumulated \AStext and uniformity losses}
        \EndFor
        \State \textbf{return} $\theta_e^T, \{\theta_l^i\}_{i=1}^{d}$
    \end{algorithmic}
\end{algorithm}
\label{app:meta-opt}
For all \taskdiscovery experiments, we use the following setup.
We use the same architecture as the backbone for both the networks $f(\cdot\,;w)$ in the inner-loop and the task encoder $e(\cdot\,;\theta_e)$, which has an additional linear projection layer to $\R^d$.
To model tasks, we use $d$ predefined orthogonal hyper-planes $\{\theta_l^i\}_{i=1}^d$, which remain fixed throughout the training.
Then at each meta-optimization step, we sample a task corresponding to one of these $d$ hyper-planes $\theta_l$ and update the encoder by using the gradient of the \AStext w.r.t. encoder parameters: $\nabla_{\theta_e}\AS(t_{(\theta_e, \theta_l)})$.
We found it to be enough to optimize the \AStext w.r.t. these tasks corresponding to the $d$ predefined hyper-planes to train the encoder for which \textit{any} randomly sampled hyper-plane $\Tilde{\theta}_l$ gives rise to a task with high \AStext.

We accumulate the uniformity loss (\secref{sec:td-encoder-formulation}, \eqref{eq:uniformity}) for embeddings of all images passed through the encoder during the inner-loop and backpropagate through it once at each meta-optimization step.
We set the weight $\lambda$ of the uniformity loss for the main setting with $d=32$ to $0.66$ for ResNet-18, $5$ for MLP and $100$ for ViT.
These weights were chosen by trying multiple values and ensuring that similarities between 32 discovered tasks for each architecture are approximately the same between $0.5$ and $0.55$.
We use the SGD optimizer with \texttt{lr=1e-2} and batch size $512$ for the inner-loop and Adam with \texttt{lr=1e-3} as the meta-optimizer for the outer-loop. We use the \texttt{higher} library \cite{grefenstette2019generalized} for meta-optimizaion.

To further speed up the \taskdiscovery framework, we early-stop the inner-loop optimization when the \AStext between two networks is above $0.6$ for more than three consecutive steps.
Also, since we limit the number of steps to 50, every training batch in the inner-loop is novel for two networks and can be seen as validation data.
We utilize this and accumulate the proxy-\AStext between two networks on these training batches (before applying the gradient update on them) and use it as the final proxy-\AStext for backpropagation and updating the encoder.
See Algorithm~\ref{alg:xas}.



\subsection{Architectural Details}
\label{app:arch}
In all cases, we initialize models' weights $w_1,w_2$ with Kaiming uniform initialization \cite{he2015delving}, which is the default initialization in PyTorch \cite{NEURIPS2019_9015}.

\begin{table}[]
\begin{tabular}{|l|l|c|c|}
\hline
Module name       & Module                                                         & in      & out  \\ \hline
Layer 1    & Linear($3072, 1024$), ReLU, BN($1024$) & $3072$ & $1024$ \\ \hline
Layer 2    & Linear($1024, 512$), ReLU, BN($512$)      & $1024$   & $512$  \\ \hline
Layer 3    & Linear($512, 256$), ReLU, BN($256$)       & $512$     & $256$  \\ \hline
Layer 4    & Linear($256, 64$), ReLU                              & $256$     & $64$   \\ \hline
Classifier & Linear($64, 2$)    & $64$      & $2$    \\ \hline
\end{tabular}
\caption{MLP Architecture}
\label{app:mlp-arch}
\end{table}

For the MLP, we use the 4-layer perceptron with batch normalization (see \tabref{app:mlp-arch}).

We use standard ResNet18 \cite{he2016deep} architecture adapted for CIFAR10 dataset: the first 7x7 conv with stride=2 is replaced by 3x3 conv with stride=1. 

We use ViT architecture in the following way. We split an image onto 4x4 patches. We remove dropout and replace all LayerNorm layers with BatchNorm to achieve faster convergence. We reduce the embedding dimensionality and the MLP hidden layers' dimensionality from 512 to 256 and utilize only six transformer blocks to fit the model into memory for task discovery. After the transformer, we feed class embedding to a linear classifier.

\section{Do Discovered Tasks Contain Human-Labelled Tasks?}
\label{app:real-recall}
Here, we describe in more detail how we answer this question.
In order to answer it, for each human-labelled task $\treal  \in \Treal$, we find the most similar discovered task $\hat{\t}_d \in \Tset_{\theta_e^*}$ for the encoder $e(\cdot\,;\theta_e^*)$ from \secref{sec:td-cifar}.
We, however, cannot simply list all the discovered tasks from this set and compare them against all human-labelled ones since this set is virtually infinitely large (any hyperplane is supposed to give rise to a high-\AStext task).
Therefore, for each task $\treal$, we fit a linear classifier $\theta_l^r$ using embeddings as inputs and $\treal$ as the target.
We use the corresponding task $\hat{\t}_d(x) = \theta_l^{r\top} e(x; \theta_e^*)$ as (an approximation of) the most similar task from the set of discovered tasks $\Tset_{\theta_e^*}$.
\figref{fig:app-real-recall}{-left} shows that the similarity increases with scaling the embedding size but remains relatively low for most human-labelled tasks for the scale at which we ran our experiments.

\end{document}